\def\eqref#1{equation~\ref{#1}}
\def\1{\bm{1}}
\DeclareMathAlphabet{\mathsfit}{\encodingdefault}{\sfdefault}{m}{sl}
\SetMathAlphabet{\mathsfit}{bold}{\encodingdefault}{\sfdefault}{bx}{n}
\newtcolorbox{block}[1][]{
    colback=gray!15!white,
    colframe=gray!20!white,
    boxrule=0pt,
    fonttitle=\bfseries,
    boxsep=0pt, 
    left=2pt,   
    right=2pt,  
    #1
}
\newtheorem{theorem}{Theorem}
\newtheorem{lemma}{Lemma}
\theoremstyle{definition}
\newtheorem{definition}{Definition}
\newtheorem{remark}{Remark}
\newcommand{\Gcal}{\mathcal{G}}
\newcommand{\indep}{\perp \!\!\! \perp}
\newcommand{\dep}{\not\!\perp\!\!\!\perp}
\newcommand{\dsep}{\indep_{\mkern-3.5mu d}}
\newcommand{\indsep}{\not\indep_{\mkern-3.5mu d}}
\newcommand{\circminus}{\overset{\vphantom{.}\smash{\raisebox{0.1ex}{\scalebox{1}{$\scriptstyle\blacktriangle$}}}}{\rule[0.5ex]{0.3cm}{0.1mm}}}
\title{Characterization and Learning of Causal Graphs with Latent Confounders and Post-treatment Selection from Interventional Data}
\author{Gongxu Luo$^1$,~~Loka Li$^{1}$,~~Guangyi Chen$^{1,2}$,~~Haoyue Dai$^{2}$,~~Kun Zhang$^{1,2}$\\
$^1$ Mohamed bin Zayed University of Artificial Intelligence, 
$^2$ Carnegie Mellon University\\
\{gongxu.luo, kun.zhang\}@mbzuai.ac.ae\\
}
\begin{document}

\maketitle
\begin{abstract}
Interventional causal discovery seeks to identify causal relations by leveraging distributional changes introduced by interventions, even in the presence of latent confounders. Beyond the spurious dependencies induced by latent confounders, we highlight a common yet often overlooked challenge in the problem due to post-treatment selection, in which samples are selectively included in datasets after interventions. This fundamental challenge widely exists in biological studies; for example, in gene expression analysis, both observational and interventional samples are retained only if they meet quality control criteria (e.g., highly active cells). Neglecting post-treatment selection may introduce spurious dependencies and distributional changes under interventions, which can mimic causal responses, thereby distorting causal discovery results and challenging existing causal formulations. To address this, we introduce a novel causal formulation that explicitly models post-treatment selection and reveals how its differential reactions to interventions can distinguish causal relations from selection patterns, allowing us to go beyond traditional equivalence classes toward the underlying true causal structure. We then characterize its Markov properties and propose a $\mathcal{F}$ine-grained $\mathcal{I}$nterventional equivalence class, named $\mathcal{FI}$-Markov equivalence, represented by a new graphical diagram, $\mathcal{F}$-PAG. Finally, we develop a provably sound and complete algorithm, $\mathcal{F}$-FCI, to identify causal relations, latent confounders, and post-treatment selection up to $\mathcal{FI}$-Markov equivalence, using both observational and interventional data. Experimental results on synthetic and real-world datasets demonstrate that our method recovers causal relations despite the presence of both selection and latent confounders.\looseness=-1
\end{abstract}

\section{Introduction}
Causal discovery from interventional (and observational) data, often referred to as interventional causal discovery, aims to identify causal relations by exploiting distributional changes induced by interventions~\citep{spirtes2000causation, pearl2009causality}. Despite progress in interventional causal discovery in handling latent confounders, pre-treatment selection \citep{dai2025selection}, and biological constraints \citep{luo2025gene}, we highlight a common yet often overlooked problem, \emph{post-treatment selection}, which refers to the selective inclusion of samples after interventions~\citep{Heckman_1978}. For example, in gene perturbation studies, only perturbed cells (intervention) that pass the quality control (selection) are profiled~\citep{norman2019exploring}. In Clinical Trial Per-Protocol Analysis, only participants completing over 80\% of scheduled visits (e.g., up to week 12) are included in the final analysis~\citep{detry2014intention}. Failure to account for post-treatment selection introduces spurious dependencies and intervention-driven distributional changes that mimic causal responses, thereby leading to incorrect statistical inference and challenging existing interventional causal formulations.\looseness=-1

Specifically, existing interventional formulations neither distinguish causal relations from post-treatment selection nor detect where the selection is present. Mainstream frameworks identify causal relations and characterize interventional Markov equivalence classes (ECs) on DAGs by exploiting a cross-intervention pattern: after interventions on the cause, marginal distribution $p(\text{effect})$ changes, and conditional distribution $p(\text{effect}|\text{cause})$ remains~\citep{tian2001causal,hauser2012characterization,hauser2015jointly}. When latent confounders are present, variations in $p(\text{effect}|\text{cause})$ are further utilized to characterize the interventional ECs involving latent confounders~\citep{eaton2007exact, ghassami2017learning,kocaoglu2019characterization,zhou2025characterization}. However, post-treatment selection is non-identifiable within these frameworks because it yields the same pattern, variant $p(\text{effect})$ and invariant $p(\text{effect}|\text{cause})$ before and after the intervention, as causal relations. For example, under post-treatment selection, \cref{fig:motive}(a) exhibits the same pattern (variant $p(X_2)$ and invariant $p(X_2\mid X_1)$ after intervening on $X_1$) as (b). Subsequently, current frameworks place (a) and (b) in the same EC (same representation), regardless of whether a direct causal link exists between $X_1$ and $X_2$, thereby failing to identify causal relations from post-treatment selection.  An analogous non‑identifiability arises for direct selection, as illustrated in \cref{fig:motive}(c) and (d). This representational gap challenges existing frameworks and motivates a new formulation that explicitly models post‑treatment selection.\looseness=-1


\begin{figure}
    \centering
    \vspace{-0.1cm}
    \includegraphics[width=0.9\linewidth]{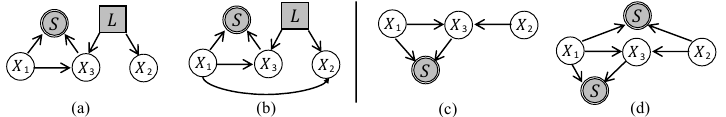}
    \caption{\looseness=-1Motivation examples. (a) $\&$ (b) exhibit same dependence with tails from $X_1$ and arrowheads into $X_2$, regardless of direct causation; (c) $\&$ (d) exhibit same dependence with tails on both $X_1$ and $X_2$, regardless of direct selection. Existing methods cannot distinguish these cases, whereas ours can.}
    \label{fig:motive}
    \vspace{-0.2cm}
\end{figure}

In this paper, we examine the causal structure among intervened variables in the general setting involving latent confounders and selection bias, explicitly handling post-treatment selection without imposing graphical or parametric assumptions. First, we demonstrate how causal relations, latent confounders, and post-treatment selection differ in structural symmetries (e.g., selection structure with both tails on endpoints, while causation is not) and distributions (variability and invariance) after intervention, which are characterized by conditional independence (CI) patterns. Second, building on these observations, we propose a $\mathcal{F}$ine-grained $\mathcal{I}$nterventional equivalence class (e.g., distinguishing \cref{fig:motive}(a) from (b), and (c) from (d)), named $\mathcal{FI}$-Markov equivalence, and provide detailed characterizations. In graphical representation, partial ancestral graph (PAG) edges encode ECs with a broad range of possible structures and thus prevent the unique graphical representation. To obtain a more concise and expressive graphical representation for $\mathcal{FI}$-Markov equivalence, we introduce $\mathcal{F}$-PAG, an extension of the PAG diagram that incorporates novel edge types. Third, we present a sound and complete algorithm $\mathcal{F}$-FCI for recovering the $\mathcal{FI}$-Markov equivalence class.\looseness=-1

\textbf{Contributions.} In this paper, we focus on a fundamental yet largely overlooked problem, the \textit{post-treatment selection} that lies beyond the scope of existing interventional causal discovery frameworks. First, we introduce a causal formulation that models post-treatment selection in the presence of latent confounders, and we define the novel $\mathcal{FI}$-Markov equivalence and $\mathcal{F}$-PAG accordingly. Second, building on this formulation, we develop a new algorithm $\mathcal{F}$-FCI, which integrates intervention-based CI patterns with tailored orientation rules. Theoretically, we prove its soundness and completeness. Third, we validate our approach on both synthetic and real-world datasets, demonstrating its effectiveness. Collectively, these contributions provide a principled framework for distinguishing post-treatment selection from true causal relations, thereby broadening the scope of interventional causal discovery.\looseness=-1
\vspace{-0.2cm}

\section{Preliminaries and motivation}\label{sec:2}
In this section, we first introduce the graphical causal model that involves both latent confounders and selection bias (\S~\ref{sec:2.1}).  We then review the standard paradigm for interventional causal discovery (details in Appendix \ref{related_work}) and demonstrate why it fails to handle post-treatment selection (\S~\ref{sec:2.2}).\looseness=-1
\subsection{Graphical causal model with latent confounders and selection bias}
\label{sec:2.1}
We begin with the general problem setup: a DAG with latent confounders and selection bias. Let the DAG $\Gcal$ on vertices with index $[N]\coloneqq \{1,\cdots,N\}$ encode the structure of the underlying causal model where vertices correspond to observed random variables $X=(X_i)_{i=1}^N$. For any subset $A\subset [N]$, let $X_A \coloneqq (X_i)_{i\in A}$ and by convention $X_\varnothing \equiv 0$. Apart from the observed ones, $L = \{L_{i}\}_{i=1}^{R}$ accounts for the confounders that affect $X$ but remain unobserved (latent confounders), and the exogenous selection variable $S = \{S_{i}\}_{i=1}^{T}$ generally represent both \textit{pre-treatment selection} (preferential inclusion of samples before intervention) and \textit{post-treatment selection} (arising after intervention)~\citep{Heckman_1978,elwert2014endogenous}. In this paper, we specialize in post-treatment selection and assume selection works on at least two observed variables. Throughout, analyses are conducted conditional on $S=1$ (i.e., within the selected sample).\looseness=-1


To represent the general graph with latent confounders and selection bias, the ancestral graph is defined by a mixed graph without direct and indirect cycles (detailed in \cref{def:ancestral}). To investigate the learnability of graphical models and the information-theoretic limits of the CI test on observational data, the Markov properties of ancestral graphs are examined. Analogous to the d-separation (\cref{d-separation}) criterion used for DAGs, the m-separation (\cref{def:m-separation}) blocks the paths in ancestral graphs. Under the pairwise Markov property, in which the absence of edges reflects conditional independence, the ancestral graph that satisfies this property, a.k.a. maximality assumption (\cref{def:maximality}), is the Maximal Ancestral Graph (MAG). Given that the global, local, and pairwise Markov properties enable the recovery of causal structures via CI tests, under the representation of the MAG diagram, different graphs that entail the same m-separation form the Markov equivalence class (\cref{def:Me}).\looseness=-1

\subsection{Limitations of existing Interventional Causal Discovery paradigm under Post-Treatment Selection}
\label{sec:2.2}

\begin{figure}
    \centering
    \includegraphics[width=1\linewidth]{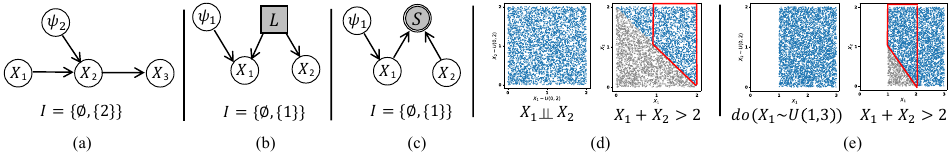}
    \caption{Examples of graphical representations. (a) Augmented DAG with explicit intervention indicators ($\psi$). (b) Extension of the augmented DAG to include latent confounders. (c) Modeling post-treatment selection using the augmented DAG, with toy examples of selection on observational data (d) and selection after intervention (e), where the positively invariant p($X_2|X_1$) is marked in red.}
    \label{fig:aug}
    \vspace{-0.1cm}
\end{figure}
Interventional causal discovery aims to learn the structure of $\Gcal$ from data collected under multiple (hard and soft) intervention settings, each with an \textit{intervention target} $I\subset [N]$, meaning variables $X_I$ are intervened on. Let $\mathcal{I}=\{I^{(0)}, I^{(1)}, \dots, I^{(K)}\}$ denote the collection of intervention targets across $K$ interventions, and $\{p^{(0)}, p^{(1)}, \dots, p^{(K)}\}$ indicate the corresponding \textit{interventional distributions} over $X$. We assume throughout $I^{(0)} = \varnothing$, i.e., the pure observational data is available.

Hard interventions remove all incoming edges to the vertices in the intervention target $I^{(k)}$ in $\Gcal$ while all other edges remain. Soft interventions do not break any arrows incident on the intervention target; instead, they only change the conditional distribution \citep{eberhardt2007interventions}. Rather than analyzing each interventional setting separately, a more effective approach is to exploit changes and invariances across settings: intervening on a cause alters the marginal distribution $p(\text{effect})$, while the conditional $p(\text{effect}|\text{cause})$ remains invariant. Conversely, intervening on an effect leaves $p(\text{cause})$ unchanged, but $p(\text{cause}|\text{effect})$ changes~\citep{hoover1990logic,tian2001causal}. Such invariance is exploited in the \textit{invariance causal inference framework}~\citep{meinshausen2016methods,ghassami2017learning} and has been extended to settings with latent confounders \citep{jaber2020causal}. 

To formally exploit such invariance analysis and model ``the action of changing targets'', \citet{newey2003instrumental,korb2004varieties} introduce the \textit{augmented DAG}, denoted by $\operatorname{aug}(\Gcal, \mathcal{I})$, which, as shown in~\cref{fig:aug}(a), extends the original $\Gcal$ by adding exogenous binary vertices $\psi=\{\psi_{I^{(k)}}\}_{k=1}^K$ as \textit{intervention indicators}, each pointing to its target $I^{(k)}$. Whether the $k$-th intervention alters a marginal density, i.e., $p^{(0)}(X_A) \neq p^{(k)}(X_A)$ or equivalently $p(X_A\mid \psi_{I^{(k)}}=0) \neq p(X_A\mid \psi_{I^{(k)}}=1)$, is then nonparametrically represented by the CI relation $\psi_{I^{(k)}} \dep X_A$, and graphically represented by the \textit{d-separation} $\psi_{I^{(k)}} \not\dsep X_A$ in $\operatorname{aug}(\Gcal, \mathcal{I})$, where $\dsep$ denotes d-separation and $\not\dsep$ d-connection. Moreover, latent confounders can also be incorporated into augmented DAGs, as shown in~\cref{fig:aug}(b). The invariance in marginal distributions ($p^{(0)}(X_2) = p^{(1)}(X_2)$ represented by $\psi_1 \indep X_2$) and variability in conditional distributions ($p^{(0)}(X_2|X_1) \neq p^{(1)}(X_2|X_1)$ represented by $\psi_1 \dep X_2|X_1$) after intervention help distinguish latent confounders from causal relations. Causal discovery algorithms like PC ~\citep{spirtes2000causation} and FCI~\citep{spirtes2000causation,zhang2008completeness} have been applied in this context~\citep{zhang2008causal,huang2020causal,magliacane2016ancestral,kocaoglu2019characterization}, and \textit{augmented MAG} have been developed as the corresponding graphical representation.

Building on the established framework of interventional causal discovery, when selection appears after intervention, the post-treatment selection induces changes in the marginal distribution $p(\text{effect})$ while keeping invariant in the conditional distribution $p(\text{effect}|\text{cause})$, as illustrated in~\cref{fig:aug}(c): $p^{(1)}(X_2 \mid  S=1) \neq p^{(0)}(X_2  \mid  S=1)$ and $p^{(1)}(X_2 \mid X_1, S=1) = p^{(0)}(X_2\mid X_1, S=1)$ with examples in (d) and (e). Although post-treatment selection can be represented within the augmented framework, its invariant and variant characteristics are indistinguishable from those of causal relations, rendering it non-identifiable as discussed in \cref{fig:motive}. This motivates a new formulation that models post-treatment selection and identifies true causal relations.\looseness=-1

\section{New Formulation: intervention meets post-treatment selection}\label{sec:3}
Based on the exploration of the interventional causal discovery paradigm, in this section, we extend the paradigm to design a new formulation for post-treatment selection in the presence of latent confounders (\S~\ref{sec:3.1}), characterize the Markov properties (\S~\ref{sec:3.2}), and provide the graphical criteria for determining whether two augmented DAGs are Markov equivalent given the same interventions (\S~\ref{sec:3.3}).\looseness=-1


\subsection{Modeling post-treatment selection}
\label{sec:3.1}
The first step is to model the post-treatment selection explicitly. Since the variant and invariant characteristics are consistent with the Markov assumption, post-treatment selection can be naturally modeled within the augmented DAG (see \S~\ref{sec:2.2}) by adding a selection variable $S$. Accordingly, we adopt the augmented DAG to coherently unify observational and interventional data by introducing an intervention indicator $\psi$. Under this model, the joint distribution over the observed variables $X$ in the $k$-th intervention, conditioning on post-treatment selection denoted by $p_{s}^{(k)}(X)$, factorizes as
\begin{align}
    p_{s}^{(k)}(X) = \prod_{\{ i \mid \{i\} \subset I^{(k)} \}
}^{}p^{(k)}(X_{i}|\hat{X}_{pa_{\mathcal{G}}(i)}, S=1)\prod_{\{ j \mid \{j\} \not\subset I^{(k)}
}^{}p^{(0)}(X_{j}|\hat{X}_{pa_{\mathcal{G}}(j)}, S=1),
    \label{eq1}
\end{align}where $\hat{X}_{pa_{\mathcal{G}}(i)} \subset X \cup L$ indicates the parents of $X_{i}$, and $S=1$ indicates the presence of post-treatment selection. To represent the details of the structural causal model, the graph involving $S$ and $L$ is represented by the augmented DAG, which is redefined as follows (details in \cref{def:interventional_CG_detail}).\looseness=-1

\begin{definition}[\textbf{Augmented DAG}]
\label{def:interventional_CG}
For a DAG $\mathcal{G}$ over $X \cup L \cup S$ and intervention target $I \subset [N]$, the \textit{augmented graph} $Aug_{\mathcal{I}}(\mathcal{G})$ is a DAG with vertices $\psi \cup X \cup L \cup S \cup \epsilon$, where: $\psi=\{\psi_{I^{(k)}}\}_{k=1}^K$ is a set of exogenous binary indicators for the representation of marginal changes between two environments (observation-intervention or intervention-intervention), pointing to the corresponding intervened variable $X_{I^{(k)}}$; $\epsilon$ is exogenous noise for variables, whether observed or hidden.\looseness=-1
\end{definition}

\begin{wrapfigure}{r}{0.5\textwidth}
  \centering
  \vspace{-0.15cm}
  \includegraphics[width=0.49\textwidth]{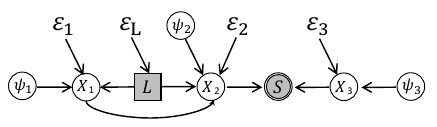}
  \caption{Illustration of a structural causal model (SCM) represented by an augmented DAG.}
  \label{fig:SCM}
  \vspace{-0.2cm}
\end{wrapfigure}
An illustration of the augmented DAG is shown in Figure \ref{fig:SCM}, depicting the data generation process. In $Aug_{\mathcal{I}}(\mathcal{G})$, only $X$ and $\psi$ are measurable, forming the basis of the representation of different environments, such as observational data $p(X|\psi = 0, S = \mathbf{1})$ and interventional data $p(X|\psi = 1, S = \mathbf{1})$. $S = \mathbf{1}$ is conditioned on, meaning all individuals, whether observed or intervened, are selected at the outset. Moreover, following the diagram of MAG for marginalized representation \citep{zhang2008completeness}, each Augmented DAG can also be formally represented by the corresponding Augmented MAG.\looseness=-1

\subsection{Characterizing Markov properties}
\label{sec:3.2}

Our ultimate goal is to learn the causal structure from both observational and interventional data. On the rationale of modeling post-treatment selection and marginal changes between observational and interventional data using the Augmented DAG $Aug_{\mathcal{I}}(\mathcal{G})$ in \S\ref{sec:3.1}, the standard Markov properties, i.e., the global Markov property (formulated via d-separation) and the local Markov property (each node is conditionally independent of its non-descendants given its parents), hold exactly as they do in conventional DAGs. These properties provide the theoretical foundation for using CI tests in structure learning and offer an information upper bound for the CI implementations.\looseness=-1

By leveraging the Markov properties, CI tests can recover causal structure from data. In particular, three classes of statistical signals are informative:
	1)	\textbf{Interventional distribution changes:} Variability in marginal observational and interventional distributions manifests as conditional dependencies between the intervention indicator $\psi$ and affected variables $X$. For example, in Figure \ref{fig:MP}(a), $\psi_1 \dep X_2$ indicates that perturbing $X_1$ propagates a distributional change to $X_2$.
	2)	\textbf{Invariant relations:} Equality of conditional distributions across observational and interventional data signals invariance. Specifically, $\psi_1 \indep X_2\mid X_1$ indicates the invariance of $p^{(0)}(X_2\mid X_1) = p^{(1)}(X_2\mid X_1)$, $I^{(1)}=\{1\}$.
	3)	\textbf{Structural symmetries:} Certain structures exhibit characteristic symmetry in their CI patterns. For instance, a symmetric selection shown in Figure \ref{fig:MP}(e) yields $\psi_1 \indep X_2\mid X_1$, $\psi_2 \indep X_1\mid X_2$, $\psi_1 \dep X_2$, $\psi_2 \dep X_1$. Below, we formally define these relations implied by the model.

\begin{theorem}[CI and invariance implementation]
\label{thm:CI}
For positive interventional distributions $p^{(k)}(X)$ and observational distribution $p^{(0)}(X)$ generated from the DAG $\mathcal{G}$ in the presence of latent confounders $L$ and selection $S$ with intervention targets $\{I^{(k)}\}_{k\in \{0\} \cup [K]}$, let $\{Aug_{I^{(k)}}(\mathcal{G})\}_{k\in \{0\} \cup [K]}$ be the corresponding augmented DAGs. For any disjoint $A, B, C \in [N]$, the following statement holds:
\begin{itemize}[noitemsep,topsep=0pt,left=0pt]
\item For any $k \in \{0\}\cup [K]$, if $X_A \dsep X_{B}|\{X_{C}, S\}$ holds in $Aug_{I^{(k)}}(\mathcal{G})$, then $X_{A}\indep X_{B}|\{X_{C},S\}$ in $p^{(k)}$.\looseness=-1
\item For any $k \in [K]$, if $\psi_{I^{(k)}} \dsep X_{A}|X_{B}$ holds in $Aug_{I^{(k)}}(\mathcal{G})$, then $p^{(k)}(X_{A}|X_{B}) = p^{(0)}(X_{A}|X_{B})$.
\item For any $k \in [K]$, if $\psi_{I^{(k)}} \indsep X_{A} \mid \varnothing$ holds in $Aug_{I^{(k)}}(\mathcal{G})$, then $p^{(k)}(X_{A}) \neq p^{(0)}(X_{A})$.
\end{itemize}
\end{theorem}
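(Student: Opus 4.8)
The plan is to treat the augmented graph $Aug_{I^{(k)}}(\mathcal{G})$ as an ordinary DAG carrying an induced joint law $P$ over $V\coloneqq\psi\cup X\cup L\cup S\cup\epsilon$, and to transport graphical facts about $P$ to the interventional and observational laws via the identifications $p^{(k)}(\cdot)=P(\cdot\mid\psi_{I^{(k)}}=1,\ \psi_{I^{(j)}}=0\ (j\neq k),\ S=\mathbf 1)$ and $p^{(0)}(\cdot)=P(\cdot\mid\psi=\mathbf 0,\ S=\mathbf 1)$; this identification is precisely the factorization~\eqref{eq1} read together with the exogenous mechanisms of $\psi$, $\epsilon$ and $S$, and positivity of the $p^{(k)}$ ensures every conditioning event has positive probability. (I read the standing convention that $S=\mathbf 1$ is conditioned on throughout into the conditioning sets of all three bullets, matching the explicit appearance of $S$ in the first one.) The recurring ingredient I would isolate first is a \emph{source-lifting} observation: each $\psi$-node is a source of $Aug_{I^{(k)}}(\mathcal{G})$, and adjoining source nodes to a d-separating conditioning set never opens a path, since a source can be neither a collider nor a descendant of a collider; hence any d-separation not mentioning the $\psi$'s can be upgraded, for free, to one whose conditioning set contains all of $\psi$. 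Marginalizing over the unobserved $L,\epsilon$ then costs nothing, as conditional independence is preserved under marginalization.

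For the first bullet, since $Aug_{I^{(k)}}(\mathcal{G})$ is a DAG, $P$ obeys the global Markov property (as noted in \S~\ref{sec:3.2}): $X_A\dsep X_B\mid\{X_C,S\}$ yields $X_A\indep X_B\mid\{X_C,S\}$ under $P$, which by source-lifting is also $X_A\indep X_B\mid\{X_C,S,\psi\}$; fixing $\psi_{I^{(k)}}=1$, the remaining indicators at $0$, and $S=\mathbf 1$ leaves $X_A\indep X_B\mid\{X_C,S\}$ under $p^{(k)}$. The second bullet is the same argument applied to the pair $(\psi_{I^{(k)}},X_A)$: from $\psi_{I^{(k)}}\dsep X_A\mid\{X_B,S\}$, source-lifting in the remaining indicators and the global Markov property give $\psi_{I^{(k)}}\indep X_A\mid\{X_B,S,\psi_{-I^{(k)}}\}$ under $P$; as $\psi_{I^{(k)}}$ is binary, this says the conditional law of $X_A$ given $\{X_B,S,\psi_{-I^{(k)}}\}$ does not vary with $\psi_{I^{(k)}}$, and evaluating at $\psi_{-I^{(k)}}=0$, $S=\mathbf 1$ and reading off $\psi_{I^{(k)}}\in\{0,1\}$ gives $p^{(k)}(X_A\mid X_B)=p^{(0)}(X_A\mid X_B)$.

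The third bullet is the main obstacle, since it runs in the hard direction — d-connection to statistical dependence — for which soundness of d-separation gives nothing, and which genuinely fails for some unfaithful Markov laws even when $X_A$ descends from the intervened node. I would first record a structural simplification: because $\psi_{I^{(k)}}$ is a source and the conditioning set is empty, any path witnessing $\psi_{I^{(k)}}\indsep X_A\mid\varnothing$ must in fact be a directed path $\psi_{I^{(k)}}\to X_j\to\cdots\to X_A$ (the first reversal along it would create a collider, which is inactive under empty conditioning), so $X_A$ descends from the intervention target; moreover this path meets no $\psi$-node and no selection node, so the d-connection persists after adjoining $\{S,\psi_{-I^{(k)}}\}$ to the conditioning set. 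The remaining step, carrying the weight of the argument, is to upgrade this persistent d-connection to $p(X_A\mid S=\mathbf 1,\psi_{-I^{(k)}}=0,\psi_{I^{(k)}}=1)\neq p(X_A\mid S=\mathbf 1,\psi_{-I^{(k)}}=0,\psi_{I^{(k)}}=0)$, i.e.\ $p^{(k)}(X_A)\neq p^{(0)}(X_A)$. This is a faithfulness obligation, and I would discharge it by invoking (interventional) faithfulness together with positivity, or — for hard interventions — by a direct structural-causal-model argument that the intervention genuinely alters the law of $X_j$ and that positivity prevents this change from cancelling along the directed path before it reaches $X_A$. Pinning the dependence to the specific configuration $\{S=\mathbf 1,\psi_{-I^{(k)}}=0\}$ rather than to merely some configuration, and ruling out such fine-tuned cancellations, is the step I expect to demand the most care.
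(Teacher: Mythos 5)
Your proposal does not follow the paper's route for the simple reason that the paper gives no route: in Appendix~\ref{app:proofs} the authors declare \cref{thm:CI} an ``immediate result of the definition of augmented DAGs'' and omit the proof, citing standard references. Measured against that, your write-up for the first two bullets is the standard soundness argument made explicit and is correct: the global Markov property of the augmented DAG, the observation that the $\psi$'s are sources (so adjoining them to a separating set cannot open a path, and marginalizing $L,\epsilon$ preserves independence), and then reading $p^{(k)}$ and $p^{(0)}$ off as conditional laws at the two values of the binary $\psi_{I^{(k)}}$. Your handling of the third bullet is where you genuinely add something the paper glosses over: you are right that $\psi_{I^{(k)}}\indsep X_A\mid\varnothing$ (d-\emph{connection}) implying $p^{(k)}(X_A)\neq p^{(0)}(X_A)$ is not a Markov-property consequence at all, and positivity alone cannot deliver it --- unfaithful parameterizations can cancel the effect of the intervention along the directed path $\psi_{I^{(k)}}\to\cdots\to X_A$ whose existence you correctly deduce from $\psi_{I^{(k)}}$ being a source with an empty conditioning set (and your remark that conditioning on the sink nodes $S$ only preserves this connection is also right). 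The paper never supplies this missing ingredient within the theorem's hypotheses; faithfulness is only assumed later, in \cref{sec:4}, for the algorithm. So your proposal is more careful than the source: bullets one and two are fully proved, and bullet three is reduced to an explicitly named faithfulness (or interventional-genericity) assumption that the theorem, as stated with positivity only, in fact needs. The one thing you leave undone --- actually discharging that assumption rather than invoking it --- is precisely the thing the paper also leaves undone, so it is a fair stopping point, though it would be worth stating in your final write-up that the third bullet should be read as holding under faithfulness of the selected interventional distributions, not unconditionally.
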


\begin{remark}
    $\psi_{I^{(k)}}$ generally marginalizes changes between different environments. The difference between the two interventions on the same $I^{(k)}$ also follows the statements in Theorem \ref{thm:CI}, where the hard-hard intervention changes the causal diagram, providing additional information that is only used to identify the structures of unblocked paths. 
\end{remark}
Theorem \ref{thm:CI} shows that invariance and variability in marginal and conditional distributions are implied by graphical conditions, namely d-separation among $\psi \cup X|S=1$ in augmented DAGs. Previous studies leveraged this statistical information to distinguish causal effects from associations induced by latent confounders. However, it is known that selection bias also introduces spurious dependencies:

\begin{lemma}[Additional dependencies induced by selections]
\label{lem:additional}
For any DAG on $X_{[N]}\cup L\cup S$, targets $\mathcal{I}\in [N]$, and disjoint $A, B, C \in [N]$, if $X_A \dsep X_{B}|X_{C},S$ holds in the augmented DAG $Aug_{\mathcal{I}}(\mathcal{G})$, then $X_A \dsep X_{B}|X_{C}$ holds in the original DAG. The reverse is not necessarily true.
\end{lemma}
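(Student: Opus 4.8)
The plan is to prove the implication by contraposition --- lifting a d-connecting path from $\mathcal{G}$ into the augmented graph --- and to settle the closing (``reverse'') sentence with the textbook selection collider.

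For the forward direction I would negate the conclusion: suppose $X_A \indsep X_B \mid X_C$ in $\mathcal{G}$, so that some path $\pi$ between an $X_a$ ($a\in A$) and an $X_b$ ($b\in B$) is active (d-connecting) given $X_C$ in $\mathcal{G}$. The first --- and only delicate --- step is to show that $\pi$ contains no selection vertex. In this model the vertices of $S$ are sinks, receiving only incoming edges (as is standard for selection variables; cf.\ \cref{def:interventional_CG}); hence an $S_i$ on $\pi$ would have both of its $\pi$-edges pointing into it, i.e.\ it would be a collider on $\pi$, and since it has no descendants and does not belong to $X\supseteq X_C$, it would be an inactive collider, contradicting the activity of $\pi$. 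So $\pi$ uses only vertices of $X\cup L$. Because the augmentation adds only the exogenous vertices $\psi$ and $\epsilon$ together with their outgoing edges, $Aug_{\mathcal{I}}(\mathcal{G})$ contains $\mathcal{G}$ as a subgraph, so $\pi$ is still a path in $Aug_{\mathcal{I}}(\mathcal{G})$ with the same colliders and non-colliders. It then remains to verify that $\pi$ stays active when the conditioning set is enlarged from $X_C$ to $X_C\cup S$: every non-collider of $\pi$ avoids $X_C$ (by activity in $\mathcal{G}$) and avoids $S$ (since $\pi$ does), hence avoids $X_C\cup S$, so no non-collider becomes blocked; and every collider of $\pi$ already has a descendant in $X_C\subseteq X_C\cup S$, so it stays open. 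Therefore $X_A \indsep X_B \mid \{X_C,S\}$ in $Aug_{\mathcal{I}}(\mathcal{G})$, which is exactly the contrapositive of the claim.

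For the ``reverse'' clause, a minimal counterexample suffices: let $\mathcal{G}$ have vertices $\{X_1,X_2,S\}$ and edges $X_1\to S\leftarrow X_2$, and take $A=\{1\}$, $B=\{2\}$, $C=\varnothing$. Then $X_1\dsep X_2\mid\varnothing$ in $\mathcal{G}$ (the collider at $S$ is unconditioned), whereas in $Aug_{\mathcal{I}}(\mathcal{G})$ conditioning on $S$ opens that collider, so $X_1\indsep X_2\mid S$ --- precisely the spurious dependence that selection induces.

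I expect the main obstacle to be the step that forbids an active path in $\mathcal{G}$ from transiting through $S$: it hinges essentially on selection vertices being childless, and I would flag this as the place where the structural convention on $S$ is actually used --- if $S$ were permitted outgoing edges it could sit as a non-collider on a directed path between two observed variables (e.g.\ $X_1\to S\to X_2$), conditioning on $S$ would then destroy that path, and the statement would be false. Everything else is routine d-separation bookkeeping; the one point worth stating carefully is that passing to $Aug_{\mathcal{I}}(\mathcal{G})$ adds only exogenous vertices, so $\mathcal{G}$ sits inside it as a subgraph and $\pi$ together with its collider/non-collider structure transfers verbatim.
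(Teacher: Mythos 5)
Your proof is correct. Note that the paper does not actually spell out an argument for this lemma---it declares it (together with Theorem~\ref{thm:CI} and Lemma~\ref{lemma:dependent}) an immediate consequence of the definition of augmented DAGs and omits the proof---and your contrapositive path-lifting argument is precisely the standard reasoning that omission presupposes: since selection vertices are sinks, any $S_t$ on a path is a collider with no descendant in $X_C$, so an active path given $X_C$ avoids $S$ entirely, transfers verbatim to $Aug_{\mathcal{I}}(\mathcal{G})$, and remains active under the enlarged conditioning set $X_C\cup S$; the collider $X_1\to S\leftarrow X_2$ correctly settles the non-reversibility. Your flag that the argument hinges on $S$ having no outgoing edges is exactly the right structural point.
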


With the characterization of global and local Markov properties ready, differences in graphical properties, such as asymmetry, captured in CI patterns as shown in \cref{fig:MP}, help distinguish different structures. Specifically, \cref{fig:MP}(i) shows that (a) and (e) exhibit different CI patterns, with (e) being symmetric. We further observe that although (a) and (b) share the same CI patterns between $X_1$ and $X_2$ regardless of whether a direct causal link exists in (b), they differ in their underlying causal structures. This is due to the Y-structure at $X_2$ forming an unblocked path, which exhibits the same characteristics as causation. Beyond focusing only on cause and effect targets, hard interventions on $X_3$ open the path by blocking the selection effect on the latent confounder $L$. The variation in two different hard interventions on $X_3$ can be modeled by $\psi_3$ for representation. Then, $\psi_3 \dep X_2$ allows us to distinguish case (b) and assess whether there is a direct causal link between $X_1$ and $X_2$. Similarly, direct selection can be identified in the same way for (e) and (f). This, in turn, goes beyond traditional ECs and can identify concrete causal structures at the DAG level.

\begin{figure}
    \centering
    \vspace{-0.1cm}
    \includegraphics[width=1\linewidth]{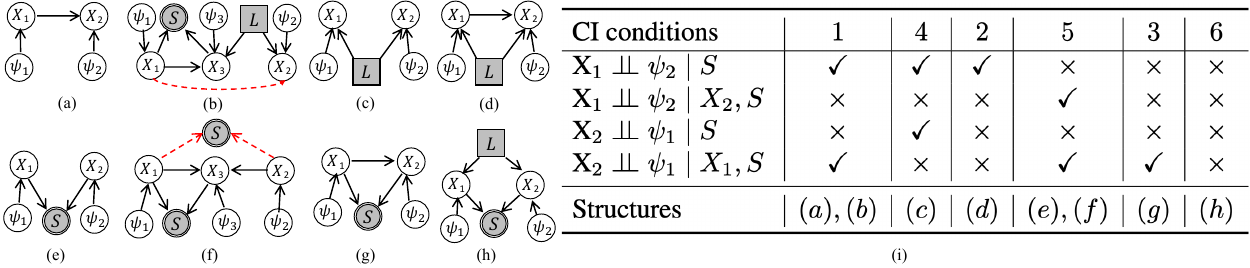}
    \caption{\looseness=-1Characterizing Markov properties with CI patterns (i) of augmented DAGs (a)-(h). Red dashed lines indicate that CI patterns persist regardless of whether $X_1 \textcolor{red}{\dashrightarrow} X_2$ (b) or $X_1 \textcolor{red}{\dashrightarrow} S \textcolor{red}{\dashleftarrow} X_2$ (f), whereas $\psi_3 \dep X_2$ provides evidence of the presence of a direct causal link or selection.}
    \label{fig:MP}
    \vspace{-0.2cm}
\end{figure}

\subsection{$\mathcal{FI}$-Markov Equivalence}
\label{sec:3.3}
In \S~\ref{sec:3.2}, we characterized the Markov properties implied by the true model of the data. Now, to identify the true model from data, in this section, we shall understand to what extent the true model is \textit{identifiable}, as different models may share identical CI implications, namely, being \textit{Markov equivalent}. To characterize the equivalence class under the general setting with MAG representation, the Markov equivalence with corresponding m-separation on observational data is discussed. However, with the help of interventional data, we propose a novel $\mathcal{F}$ine-grained $\mathcal{I}$nterventional Markov Equivalence, named $\mathcal{FI}$-Markov equivalence (\cref{def: R-EC}), and characterize the EC under the augmented DAG framework based on the Markov properties. Different from the graphical representation of the structural causal model, the representation of learned ECs is only over observations $X$. We subsequently extend the Partial Ancestral Graph~(PAG) framework (\cref{def:PAG}) with novel edges for the representation of $\mathcal{FI}$-Markov ECs, which are more informative compared with PAG.\looseness=-1

Because the Markov property allows us to distinguish structures that existing formulations cannot, for instance, whether a direct causal link exists in \cref{fig:MP}(b), we define a new $\mathcal{FI}$-Markov equivalence under the augmented DAG framework for a more precise structural representation. Two different augmented DAGs with the same intervention targets can entail the same CI patterns in the data. Formally,\looseness=-1

\begin{definition}[\textbf{$\mathcal{FI}$-Markov equivalence}]
\label{def: R-EC}
    Two Augmented DAGs, $Aug_{\mathcal{I}}(\mathcal{G}_1)$ and  $Aug_{\mathcal{I}}(\mathcal{G}_2)$, are \textit{$\mathcal{FI}$-Markov equivalent} with the same intervention targets $\mathcal{I}$, if and only if they have the same d-separation (the same skeleton and v-structure in the description of the corresponding MAG representation) among $X_{[N] \setminus \mathcal{I}}$, and the same CI patterns between $\psi$ and any intervened variable $X_i \in X_{\bigcup\mathcal{I}}$.\looseness=-1
\end{definition}

\subsubsection{Graphical criteria for $\mathcal{FI}$-Markov Equivalence}\label{sec:3.3.2}
When learning the EC based on Markov properties, causal discovery methods only recover whether the unblocked paths have a tail or an arrowhead at each endpoint over $X_{[N]}$, not the full structure with $L$ and $S$~\citep{kocaoglu2017cost}. These unblocked paths are named inducing paths, defined as follows:\looseness=-1

\begin{definition}[\textbf{Inducing path}]
\label{def:indcingpath}
In augmented DAGs, $X_i$, $X_j$ are any two vertices, and $L$, $S$ are disjoint sets of vertices not containing $X_i$, $X_j$. A path $p$ between $X_i$, $X_j$ is called an inducing path relative to $\langle L, S \rangle$ if every non-endpoint vertex on $p$ is either in $L$ or a collider, and every collider on $p$ is an ancestor of either $X_i$, $X_j$, or a member of $S$.
\end{definition}
\textbf{Example.} In \cref{fig:MP}(b), the path between $X_1$ and $X_2$ is the inducing path with unblocked $X_3$\looseness=-1.

To characterize the ECs via graphical features over $X_{[N]}$, we still need to borrow the MAG representation for marginalization. Following the procedure of MAG construction introduced in \citep{zhang2008completeness}, every augmented DAG corresponds to an augmented MAG $\mathcal{M}(Aug_{I}(\mathcal{G}))$ in graphical representation. Then, we show the rules to construct $\mathcal{M}(Aug_{I}(\mathcal{G}))$ by presenting the following lemmas.
\begin{restatable}[When are two variables dependent in observational data?]{lemma}{LEMINDUCEDINPZERO}\label{lemma:dependent}
 For any $i,j \in [N]$, $X_i$ and $X_j$ are adjacent in $\mathcal{M}(Aug_{I}(\mathcal{G}))$, if and only if $X_i$ and $X_j$ have at least one inducing path in $Aug_{I}(\Gcal)$. \looseness=-1
\end{restatable}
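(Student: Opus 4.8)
\textbf{Proof proposal for Lemma~\ref{lemma:dependent}.}

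The plan is to prove both directions of the biconditional using the standard machinery for maximal ancestral graphs, adapted to the augmented DAG $Aug_{I}(\Gcal)$ over $\psi \cup X \cup L \cup S \cup \epsilon$. The key technical fact I will rely on is the classical characterization (due to Richardson and Spirtes, and used in \citep{zhang2008completeness}) that when marginalizing a DAG over a set of variables and conditioning on a selection set, two retained vertices are non-separable by \emph{any} subset of retained variables precisely when an inducing path connects them. Concretely, I would set the ``observed'' vertices to be $X_{[N]}$ (the vertices of the learned representation), the ``latent'' set to be $L \cup \epsilon$ together with $\psi$ — since $\psi$ is fixed to $0$ in the observational regime and hence plays the role of a marginalized exogenous node — and the ``selection'' set to be $S$ (conditioned on $S = \mathbf 1$). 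Adjacency in $\mathcal{M}(Aug_{I}(\Gcal))$ is, by construction of the MAG, exactly the relation ``$X_i$ and $X_j$ cannot be m-separated given any $Z \subseteq X_{[N]} \setminus \{X_i, X_j\}$ together with $S$.''

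For the ``if'' direction, suppose there is an inducing path $p$ between $X_i$ and $X_j$ relative to $\langle L \cup \epsilon \cup \psi, S\rangle$ in the sense of \cref{def:indcingpath}: every non-endpoint on $p$ is either latent or a collider, and every collider is an ancestor of $X_i$, $X_j$, or a member of $S$. I would show that no conditioning set $Z \subseteq X_{[N]} \setminus \{X_i, X_j\}$ can block $p$ when we also condition on $S$. The argument is the usual one: a non-collider on $p$ is latent, so it is never in $Z$ and cannot block; a collider on $p$ is an ancestor of $X_i$, $X_j$, or $S$, and since all of $X_i, X_j, S$ are conditioned on (the endpoints implicitly, $S$ explicitly), every collider is ``opened.'' Hence $p$ is d-connecting given $Z \cup S$ for every such $Z$, so $X_i$ and $X_j$ are inseparable, i.e., adjacent in the MAG.

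For the ``only if'' direction I would prove the contrapositive: if there is \emph{no} inducing path between $X_i$ and $X_j$, then some $Z \subseteq X_{[N]} \setminus \{X_i,X_j\}$ m-separates them given $Z \cup S$, so they are non-adjacent in $\mathcal{M}(Aug_{I}(\Gcal))$. The natural candidate is $Z = \big(\mathrm{an}(X_i) \cup \mathrm{an}(X_j) \cup \mathrm{an}(S)\big) \cap X_{[N]} \setminus \{X_i, X_j\}$ (ancestors taken in the augmented DAG). One then argues that any path from $X_i$ to $X_j$ must be blocked by this $Z \cup S$: if the path had all its non-colliders outside $Z$ and all its colliders ancestral to $Z \cup S \cup \{X_i,X_j\}$, then by contracting consecutive latent non-endpoints one would extract an inducing path, contradicting the hypothesis; so some non-collider lies in $Z$ (blocking it) or some collider is not ancestral to $Z \cup S \cup \{X_i, X_j\}$ (also blocking it). This ``extract an inducing path from an unblocked path'' step is the delicate part — it requires the standard but fiddly inductive surgery on the path (replacing active collider segments that route through $L$ or $\psi$ by single edges, and checking that the ancestrality condition on colliders is preserved), and care is needed because $\psi$-nodes are sources with no incoming edges, so they can only appear on a path as non-colliders and thus are automatically handled by the latent-marginalization bookkeeping. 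I expect this inducing-path-extraction argument to be the main obstacle; everything else reduces to invoking the MAG construction of \citep{zhang2008completeness} and the definition of m-separation.
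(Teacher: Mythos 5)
First, note that the paper does not actually prove this lemma: it is declared an immediate consequence of the augmented-DAG definition and the classical MAG adjacency characterization, with citations to \citep{richardson2002ancestral,zhang2008completeness}. Your plan to re-derive that classical characterization is therefore in the right spirit, but your ``if'' direction contains a step that fails as written. You argue that since every collider on the inducing path $p$ is an ancestor of $X_i$, $X_j$, or $S$, and ``all of $X_i, X_j, S$ are conditioned on (the endpoints implicitly),'' the path $p$ itself is d-connecting given every $Z\cup S$. The endpoints are \emph{not} in the conditioning set, and d-connection requires each collider to have a descendant in $Z\cup S$; a collider that is merely an ancestor of an endpoint can leave $p$ blocked. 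For example, in $X_i \leftarrow L_1 \rightarrow C \leftarrow L_2 \rightarrow X_j$ together with $C \rightarrow X_i$ (with $L_1,L_2$ latent, $C$ observed), the displayed path is an inducing path, yet it is blocked given $Z=\emptyset$; inseparability holds only because a \emph{different} path, $X_i \leftarrow C \leftarrow L_2 \rightarrow X_j$, is open. The standard Verma--Pearl/Richardson--Spirtes argument handles exactly this by rerouting: whenever a collider ancestral to an endpoint has no descendant in $Z\cup S$, one splices in the directed path from that collider to the endpoint and shows by induction that \emph{some} d-connecting path given $Z\cup S$ always exists. Without this rerouting step your ``if'' direction does not go through. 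Your ``only if'' direction chooses the right separating set and correctly identifies the inducing-path-extraction induction as the crux, but you leave that induction undone, so it remains a sketch rather than a proof.

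A secondary issue is your treatment of $\psi$: placing it in the marginalized latent set is backwards. The observational regime corresponds to \emph{conditioning} on $\psi$ (and in the paper's augmented MAG the $\psi$ vertices are retained as measurable, not summed out); a marginalized non-collider leaves paths open, whereas conditioning closes them. For single-variable targets $\psi$ has out-degree one and cannot be a non-endpoint of any path, so nothing changes, but for multi-variable targets your bookkeeping would declare $X_i$ and $X_j$ adjacent on account of $X_i \leftarrow \psi \rightarrow X_j$, which is neither an inducing path relative to $\langle L, S\rangle$ in the paper's sense nor a source of dependence in $p^{(0)}$. Aligning the marginalization/conditioning roles ($L\cup\epsilon$ marginalized, $S$ conditioned, $\psi$ retained) and supplying the rerouting induction would repair the argument.
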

The adjacencies in \cref{lemma:dependent} capture all dependencies induced by inducing paths, forming the foundation for constructing the skeleton. Then, interventional data help further identify the structures:
\begin{restatable}[When does intervention always alter marginal distribution?]{lemma}{LEMINDUCEDINPK}\label{lemma:marginal}
    For any $i,j \in [N]$, $i \in I$, $X_i$ and $X_j$ are adjacent in $\mathcal{M}(Aug_{I}(\mathcal{G}))$ with a tail at $X_i$, if and only if every inducing path between $X_i$ and $X_j$ begins with a tail in $Aug_{\mathcal{I}}(\Gcal)$, i.e., $X_i$ is the ancestor of $X_j$ or is ancestrally selected.\looseness=-1
\end{restatable}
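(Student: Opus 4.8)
\textbf{Proof proposal for Lemma~\ref{lemma:marginal}.}

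The plan is to prove the two directions of the ``tail at $X_i$'' characterization separately, leveraging Lemma~\ref{lemma:dependent} (which already guarantees adjacency of $X_i$ and $X_j$ once an inducing path exists) and the MAG construction rules of \citet{zhang2008completeness}, together with the augmented-DAG semantics in which $\psi_{I}\to X_i$. Recall that in the augmented MAG $\mathcal{M}(Aug_{I}(\mathcal{G}))$, the mark at $X_i$ on the $X_i$--$X_j$ edge is a tail precisely when $X_i$ is an ancestor of $X_j$ (or of a selection variable rendering the pair adjacent) in $Aug_{I}(\mathcal{G})$; this is the standard MAG edge-orientation convention. So the real content is to tie this ancestral condition to the statement ``every inducing path between $X_i$ and $X_j$ begins with a tail at $X_i$,'' and then to the distributional consequence ``intervention on $X_i$ always alters the marginal of $X_j$,'' which is what the lemma's title refers to and which connects back to Theorem~\ref{thm:CI}.

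First I would handle the direction: if every inducing path between $X_i$ and $X_j$ begins with a tail at $X_i$, then the edge in $\mathcal{M}(Aug_{I}(\mathcal{G}))$ has a tail at $X_i$. Since $i \in I$, the vertex $\psi_I$ points into $X_i$; I would consider the concatenation of $\psi_I \to X_i$ with an inducing path $p$ between $X_i$ and $X_j$. If $p$ starts with an arrowhead at $X_i$ (i.e., $X_i \leftarrow \cdots$ or $X_i \leftrightarrow \cdots$), then $X_i$ is a collider on this concatenated path; combined with the inducing-path property that colliders are ancestors of $X_i$, $X_j$, or $S$, one shows $\psi_I \to X_i$ followed by $p$ is itself an inducing path (relative to $\langle L, S\rangle$) between $\psi_I$ and $X_j$ along which $X_i$ is an unblocked collider — hence $\psi_I$ and $X_j$ are d-connected given the empty set, so by Theorem~\ref{thm:CI} the marginal $p^{(k)}(X_j)$ need not change, and more to the point, there would exist an inducing path to $X_j$ beginning with an arrowhead, contradicting the hypothesis unless every such $p$ begins with a tail. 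Conversely — the direction ``tail at $X_i$ $\Rightarrow$ every inducing path begins with a tail / $X_i$ is an ancestor of $X_j$ or ancestrally selected'' — follows from the MAG construction: a tail at $X_i$ in the MAG means $X_i \in \mathrm{an}(X_j \cup S)$ in $Aug_{I}(\mathcal{G})$, and I would argue that any inducing path between them must then also leave $X_i$ with a tail, since a leading arrowhead at $X_i$ on an inducing path would force (via the collider-ancestor condition applied with $\psi_I\to X_i$) a directed contradiction with acyclicity or would produce a second MAG edge orientation.

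The cleanest route for the distributional half is: ``intervention on $X_i$ always alters the marginal of $X_j$'' $\iff$ for the augmented DAG, $\psi_I \not\dsep X_j \mid \varnothing$ in a way that is \emph{robust} to the unknown structure of $L, S$ on the $X_i$--$X_j$ inducing paths, i.e., $\psi_I$ is d-connected to $X_j$ through $X_i$ along \emph{every} inducing path representative. A leading tail at $X_i$ on an inducing path $p$ means $X_i$ is a non-collider on $\psi_I \to X_i \,{\cdot}\, p$, so this concatenation transmits dependence; conversely a leading arrowhead makes $X_i$ a collider that is only activated by conditioning (or by $X_i\in\mathrm{an}(S)$), so whether the marginal changes depends on selection and is not ``always.'' I would package ``$X_i$ is the ancestor of $X_j$ or is ancestrally selected'' as exactly the condition under which a directed or into-$S$ trek from $X_i$ to $X_j$ exists, which is the tail-at-$X_i$ condition in the MAG and also the only configuration guaranteeing marginal change under arbitrary latent/selection structure compatible with the inducing-path skeleton.

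The main obstacle I anticipate is the careful bookkeeping of the collider-ancestor condition when concatenating $\psi_I \to X_i$ with an inducing path: I need to verify that $X_i$ itself satisfies the inducing-path collider requirement on the new path (it is trivially an ancestor of itself, so this is fine) \emph{and} that no previously-non-collider vertex on $p$ becomes a collider, which it cannot since concatenation at an endpoint does not change internal marks — so this is routine but must be stated. A subtler point is the ``ancestrally selected'' clause: I must ensure the equivalence handles the case where $X_i\to\cdots\to S_t$ but $X_i$ is \emph{not} an ancestor of $X_j$, checking that this still yields a tail at $X_i$ in the MAG via the selection-conditioning convention and still forces the marginal of $X_j$ to respond to intervention on $X_i$ (because conditioning on $S=1$ is in force, so $X_i\to\cdots\to S$ opens a path to $X_j$ through the selection collider structure). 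Handling both the graphical MAG-orientation side and the distributional ``always alters'' side in a unified way — rather than proving a chain of three separate equivalences — will require choosing the right intermediate characterization, and I expect that to be where most of the writing effort goes.
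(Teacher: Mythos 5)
Your reduction of the tail mark to the ancestral condition ($X_i$ an ancestor of $X_j$ or of $S$ in $Aug_{\mathcal{I}}(\mathcal{G})$) is the right bridge, but both halves of your argument have genuine gaps. For the direction ``every inducing path begins with a tail $\Rightarrow$ tail at $X_i$,'' your concatenation argument is circular: you suppose some inducing path $p$ starts with an arrowhead and conclude that this ``contradicts the hypothesis unless every such $p$ begins with a tail'' --- which is literally the hypothesis, so nothing is derived. The intermediate claims are also off: with $\psi_{I}\to X_i$ prepended, $X_i$ becomes a collider, and for the concatenation to be an inducing path (or to be open given $\varnothing$ under the $S=1$ conditioning) you would already need $X_i$ ancestral to $X_j$ or $S$, which is exactly what is in question; moreover \cref{thm:CI} says that d-connection of $\psi_{I^{(k)}}$ and $X_j$ given $\varnothing$ implies the marginal \emph{does} change, not that it ``need not change.'' What is missing is the positive step the paper takes: starting from a tail-first inducing path $X_i \to V\cdots$, every non-endpoint is latent or a collider ancestral to $X_i$, $X_j$, or $S$; acyclicity rules out ancestry of $X_i$ itself, so tracing the path yields $X_i \in Ans(X_j)\cup Ans(S)$, whence the MAG construction places the tail at $X_i$ and intervening on $X_i$ propagates a change to $p(X_j)$ ($\psi_i \dep X_j$).

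For the converse direction your proposed mechanism fails outright. You claim a leading arrowhead at $X_i$ on an inducing path would ``force a directed contradiction with acyclicity or produce a second MAG edge orientation,'' but consider $X_i \to X_j$ together with $X_i \leftarrow L \to X_j$: the MAG mark at $X_i$ is a tail (since $X_i$ is an ancestor of $X_j$), the path through $L$ is an inducing path beginning with an arrowhead at $X_i$, and there is neither a cycle nor a second edge --- the MAG mark is fixed by ancestry, not by individual inducing paths. The paper does not argue this direction graphically from the tail mark; it argues by contradiction at the level of the ancestral condition: assuming $X_i \notin Ans(X_j)\cup Ans(S)$, it enumerates how an inducing path could then begin (an outgoing edge from $X_i$ into a collider, which re-imports the ancestral/selection condition, or bidirected-type openings such as $X_i \leftrightarrow \cdot \to X_j$ and $X_i \leftrightarrow \cdot \rule[0.5ex]{0.25cm}{0.1mm} \cdot X_j$) and shows the remaining cases leave $p^{(k)}(X_j)$ invariant, contradicting the marginal-change characterization that accompanies the tail. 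In other words, the distributional content (which CI pattern the tail encodes in this framework, conditional invariance plus marginal change) is what makes this direction go through, and it cannot be replaced by the purely acyclicity-based argument you sketch; as written, your proof of this half would not survive the confounder-plus-direct-edge configuration above.
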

With \cref{lemma:marginal}, the variant marginal distribution characterizes the ECs of inducing paths starting with a tail. The variability in the conditional distribution is utilized to characterize the ECs as follows:
\begin{restatable}[When does intervention always alter conditional distribution?]{lemma}{LEMINDUCEDCI}\label{lemma:conditional}
 For any $i,j \in [N]$, $i \in I$, $X_i$ and $X_j$ are adjacent in $\mathcal{M}(Aug_{I}(\mathcal{G}))$ with an arrowhead at $X_i$, if and only if every inducing path between $X_i$ and $X_j$ begins with an arrowhead in $Aug_{\mathcal{I}}(\Gcal)$, i.e., $X_i$ is a descendant of $X_j$ or $L$.\looseness=-1
\end{restatable}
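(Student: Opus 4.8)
The plan is to mirror the structure of the proofs of Lemmas \ref{lemma:dependent} and \ref{lemma:marginal}, which together already establish that $X_i$ and $X_j$ are adjacent in $\mathcal{M}(Aug_{I}(\mathcal{G}))$ iff they have an inducing path, and that the endpoint at $X_i$ is a tail iff \emph{every} such inducing path leaves $X_i$ with a tail (equivalently $X_i$ is an ancestor of $X_j$ or is ancestrally selected). Since in a MAG each edge mark is either a tail or an arrowhead, the arrowhead-at-$X_i$ claim is almost the logical complement of Lemma \ref{lemma:marginal}: $X_i$ and $X_j$ are adjacent with an arrowhead at $X_i$ iff they are adjacent and it is \emph{not} the case that every inducing path leaves $X_i$ with a tail — i.e. iff \emph{some} inducing path leaves $X_i$ with an arrowhead. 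So the real content is to show that this ``some path'' condition is equivalent to the clean structural statement ``$X_i$ is a descendant of $X_j$ or of $L$,'' and to connect it to the interventional signal $\psi_{I} \dep X_j \mid \cdot$ (altered conditional distribution), which is the quantity Lemma \ref{lemma:conditional} is really about via \cref{thm:CI}.

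First I would recall the MAG edge-orientation rule from \citet{zhang2008completeness}: in $\mathcal{M}(Aug_{I}(\mathcal{G}))$ there is an arrowhead at $X_i$ on the edge to $X_j$ precisely when $X_i$ is \emph{not} an ancestor (in $Aug_{I}(\mathcal{G})$) of $X_j$ or of any selection variable reachable along the relevant inducing path — that is, the arrowhead records non-ancestrality. Combined with the fact that $Aug_{I}(\mathcal{G})$ is a DAG with latent confounders $L$ and selection $S$, ``$X_i$ is not an ancestor of $X_j$ and not ancestrally selected toward $X_j$'' forces, by the structure of inducing paths and the acyclicity of $\mathcal{G}$, that the initial edge of any inducing path at $X_i$ must point \emph{into} $X_i$, so $X_i$ has an incoming arrow from either $X_j$ (directly), a latent confounder $L$, or a collider chain that makes $X_i$ a descendant of $X_j$ or of some $L_r$. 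This is the $(\Leftarrow)$ / structural-to-graphical direction. For $(\Rightarrow)$, I would take an inducing path $p$ between $X_i,X_j$ in $Aug_{I}(\mathcal{G})$ starting with an arrowhead at $X_i$ and trace back: the first vertex adjacent to $X_i$ on $p$ is a collider (or lies in $L$) with an edge into $X_i$; walking the path and using the inducing-path condition that every collider is an ancestor of $X_i$, $X_j$, or a member of $S$, together with the "arrowhead into $X_i$" starting condition, one deduces that $X_i$ is reachable by a directed path from $X_j$ or from some $L_r$, i.e. $X_i$ is a descendant of $X_j$ or of $L$. Then I would tie this to the interventional statement: because $i\in I$, the indicator $\psi_{I}$ points into $X_i$ in $Aug_{\mathcal{I}}(\mathcal{G})$; an arrowhead at $X_i$ on the $X_i$–$X_j$ edge means the d-connecting path $\psi_{I}\to X_i \leftarrow\!\!\cdots\!\!- X_j$ has a collider at $X_i$, so conditioning on $X_i$ (or a descendant, which $X_j$ is, when $X_i$ is a descendant of $X_j$ — handled symmetrically) d-connects $\psi_I$ and $X_j$ given $X_i$, yielding by \cref{thm:CI} that $p^{(k)}(X_j\mid X_i)\neq p^{(0)}(X_j\mid X_i)$; and conversely invariance of the conditional forces all inducing paths to leave $X_i$ with a tail, i.e. the edge mark at $X_i$ is a tail, contradiction.

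I expect the main obstacle to be the careful bookkeeping in the $(\Rightarrow)$ direction when the inducing path has several collider hops: I must rule out the possibility that an inducing path ``starts with an arrowhead at $X_i$'' for a spurious reason — e.g. $X_i$ is a collider on $p$ because of an ancestral relation to $S$ rather than a genuine descendant relation to $X_j$ or $L$ — and show this either cannot happen for an edge that survives into the MAG, or, if it does, it is subsumed by ``$X_i$ is a descendant of $L$'' (treating selection ancestry appropriately; one can absorb $S$-ancestry into the $L$-descendant case since a selection variable reachable from $X_i$ still requires an arrow into $X_i$ somewhere along a directed subpath, making $X_i$ a descendant of a latent or observed node on the path). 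A secondary subtlety is making sure the two disjunctive cases ``descendant of $X_j$'' and ``descendant of $L$'' are both genuinely possible and jointly exhaustive given that the $X_i$-endpoint mark is an arrowhead; this should follow by contrasting with Lemma \ref{lemma:marginal}'s characterization of the tail case, so the cleanest write-up argues the arrowhead case strictly as its negation plus the adjacency guarantee of Lemma \ref{lemma:dependent}, and only then translates ``not (ancestor of $X_j$ or ancestrally selected)'' into the positive descendant statement using acyclicity of $\mathcal{G}$ and the definition of inducing paths.
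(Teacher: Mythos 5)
Your route is sound and ends up close to the paper's, but the decomposition is genuinely different. The paper argues both directions directly on inducing paths, with the distributional signature doing the work: for the ``if'' direction it notes that when $X_i$ is a descendant of $X_j$ or $L$ there is an arrowhead into $X_i$, so the intervention's change cannot propagate marginally, while conditioning on the collider $X_i$ d-connects $\psi_i$ and $X_j$, giving $p^{(k)}(X_j\mid X_i)\neq p^{(0)}(X_j\mid X_i)$; for the ``only if'' direction it argues by contradiction that if $X_i$ is not a descendant of $X_j$ or $L$, every non-endpoint of an inducing path must be a collider with tails toward $X_j$ or $S$, so every inducing path begins with a tail out of $X_i$ and the conditional is invariant. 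You instead take adjacency from \cref{lemma:dependent}, read the arrowhead off the MAG orientation rule (arrowhead records that $X_i$ is not an ancestor of $X_j$ or $S$), treat the arrowhead case as the complement of \cref{lemma:marginal}, and only afterwards reconnect to the interventional CI pattern via \cref{thm:CI}. Both routes operate at the paper's level of rigor, and your first-vertex argument (an arrowhead-start forces the first non-endpoint to be a latent or to be $X_j$ itself, since a vertex with a tail on the path cannot be a collider, hence $X_i$ is a descendant of $L$ or $X_j$) is cleaner than the paper's example-driven presentation of the same step.

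One concrete caution: the complement of ``every inducing path begins with a tail'' is ``some inducing path begins with an arrowhead,'' not ``every inducing path begins with an arrowhead,'' so complementation with \cref{lemma:marginal} does not by itself deliver the universally quantified right-hand side of the lemma. You flag this as a secondary subtlety, and your intended bridge (from non-ancestrality of $X_i$ with respect to $X_j$ and $S$, no inducing path can start with a tail, because a tail-start combined with the collider/ancestor condition and acyclicity would make $X_i$ an ancestor of $X_j$ or $S$) is the right one, but it must be written out: as sketched, you establish ``arrowhead mark $\Rightarrow$ every path starts with an arrowhead'' and ``some path starts with an arrowhead $\Rightarrow$ $X_i$ is a descendant of $X_j$ or $L$,'' while the remaining implication (every path starts with an arrowhead, equivalently the descendant condition, $\Rightarrow$ arrowhead mark / variant conditional) is only gestured at via ``invariance forces tails.'' That is precisely the direction the paper proves with its explicit contradiction argument; make that contrapositive explicit to close the biconditional.
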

With the foundational graphical criteria that are consistent with the MAG construction ready, the graphical criteria for the $\mathcal{FI}$-Markov equivalence learned from data are as follows:
\begin{restatable}[Graphical criteria for $\mathcal{FI}$-Markov equivalence]{theorem}{THEOREMME}\label{theorem:EC}
Two augmented DAGs $Aug_{\mathcal{I}}(\mathcal{G}_1)$ =  ($\psi \cup X \cup L \cup S$, $E$) and $Aug_{\mathcal{I}}(\mathcal{G}_2)$ = ($\psi \cup X \cup L^* \cup S^*$, $E^*$)
are $\mathcal{FI}$-Markov equivalent for a set of targets $\mathcal{I}$ if and only if the corresponding MAGs for $\mathcal{M}_1 = \mathcal{M}(Aug_{I}(\mathcal{G}_1))$ and $\mathcal{M}_2 = \mathcal{M}(Aug_{I}(\mathcal{G}_2))$ have the same skeleton and v-structure, and have the same marks and edges among intervened nodes. 
\end{restatable}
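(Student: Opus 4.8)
The plan is to reduce Definition~\ref{def: R-EC}, which is phrased directly in terms of d-separation statements in the augmented DAGs plus CI patterns between $\psi$ and the intervened nodes, into equivalent statements about the augmented MAGs $\mathcal{M}_1,\mathcal{M}_2$, and then invoke the classical MAG Markov-equivalence theory. First I would unpack the definition into two pieces: (i) the ``observational'' part, namely identical d-separation among $X_{[N]\setminus\mathcal{I}}$ (and, via the parenthetical in the definition, same skeleton and v-structures on the MAG restricted to those nodes), and (ii) the ``interventional'' part, namely the same CI patterns between each $\psi_{I^{(k)}}$ and each $X_i \in X_{\bigcup\mathcal{I}}$. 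The goal is to show that (i)+(ii) together are equivalent to: $\mathcal{M}_1$ and $\mathcal{M}_2$ have the same skeleton and v-structures overall, and agree on marks/edges among the intervened nodes.

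For the forward direction, assume $Aug_{\mathcal{I}}(\mathcal{G}_1)$ and $Aug_{\mathcal{I}}(\mathcal{G}_2)$ are $\mathcal{FI}$-Markov equivalent. The adjacency structure of $\mathcal{M}(Aug_I(\mathcal{G}))$ restricted to $X_{[N]}$ is governed by Lemma~\ref{lemma:dependent} (an edge $X_i\!-\!X_j$ iff there is an inducing path), so identical d-separation among $X$-nodes gives the same skeleton among $X$-nodes; the standard argument (as in Zhang 2008) that shared m-separation forces shared v-structures then transfers, giving agreement of v-structures. The marks/edges among intervened nodes are exactly what Lemmas~\ref{lemma:marginal} and~\ref{lemma:conditional} translate into $\psi$-CI statements: a tail at an intervened $X_i$ on the $X_i\!-\!X_j$ edge corresponds to ``every inducing path begins with a tail,'' which by Theorem~\ref{thm:CI} is read off from $\psi_{I^{(k)}}$-dependence/invariance patterns, and dually for arrowheads via Lemma~\ref{lemma:conditional}. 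Hence the shared CI patterns between $\psi$ and $X_{\bigcup\mathcal{I}}$ force the same marks on all edges incident to intervened nodes, and in particular the same edges among the intervened nodes themselves. The converse direction runs the same equivalences backward: same skeleton and v-structures on $\mathcal{M}_1,\mathcal{M}_2$ yields (by MAG equivalence theory) the same m-separations, hence the same d-separations among $X$ in the augmented DAGs; and matching marks among intervened nodes, re-expressed through Lemmas~\ref{lemma:marginal}--\ref{lemma:conditional} and Theorem~\ref{thm:CI}, reproduces the $\psi$-CI patterns.

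There is one subtlety I would handle carefully before the routine part: the v-structure condition in Definition~\ref{def: R-EC} is stated ``among $X_{[N]\setminus\mathcal{I}}$,'' whereas the theorem asserts same v-structures on the full MAG. I would argue that v-structures with a collider at, or an endpoint among, an intervened node are already pinned down by the $\psi$-CI patterns: any unshielded triple involving an intervened node has its mark at that node determined by Lemmas~\ref{lemma:marginal}--\ref{lemma:conditional}, and the presence of the $\psi_{I^{(k)}}\!\to\! X_i$ edges plus the CI patterns between $\psi$ and $X_{\bigcup\mathcal{I}}$ controls whether the two non-$\psi$-edges of the triple collide. So no information is lost, and the two formulations of the v-structure requirement coincide.

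The main obstacle I expect is the bookkeeping in this last point — carefully showing that the ``local'' interventional information (pairwise $\psi$-to-$X_i$ CI patterns, which is what Definition~\ref{def: R-EC} grants) is rich enough to recover \emph{all} marks on edges touching intervened nodes and all v-structures anchored there, not merely the edge-by-edge marks. Concretely, one must rule out the possibility of two augmented DAGs that agree on every pairwise $\psi$--$X_i$ relation and on $X$-only d-separations yet disagree on a v-structure sitting at an intervened node; I would close this by noting that such a v-structure would change some conditional-invariance statement $p^{(k)}(X_A\mid X_B)=p^{(0)}(X_A\mid X_B)$ detectable via $\psi_{I^{(k)}}\dsep X_A\mid X_B$, contradicting the assumed identical CI patterns. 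The remaining steps — translating inducing-path conditions to marks via Lemmas~\ref{lemma:dependent}--\ref{lemma:conditional}, and invoking the classical skeleton-plus-v-structure characterization for MAGs — are essentially mechanical once this interface lemma is in place.
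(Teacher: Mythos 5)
Your plan follows essentially the same route as the paper's own proof: translate the CI patterns between $\psi$ and the intervened variables into endpoint marks (tail, arrowhead, both) via \cref{lemma:marginal,lemma:conditional}, use \cref{lemma:dependent} for adjacencies, and appeal to the MAG construction rules and classical MAG Markov-equivalence theory to conclude agreement of skeleton, v-structures, and marks among intervened nodes. If anything, you are more explicit than the paper about the converse direction and about pinning down v-structures anchored at intervened nodes; the paper's proof handles only the mark-determination cases and asserts the rest via consistency with the MAG construction.
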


\subsubsection{$\mathcal{F}$-PAG: Graphical representation for $\mathcal{FI}$-Markov Equivalence}
Based on the Markov properties of the augmented DAG in the presence of latent confounders and post-treatment selection, the CI patterns precisely characterize when two such distributions are Markov equivalent in \cref{sec:3.3.2}.  This alignment with our learning objective allows us to recover the EC directly from data. For the graphical representation of the ECs, we follow the conventional approach of using the Partial Ancestral Graph (PAG), defined as follows:
\begin{definition}[\textbf{Partial ancestral graph}]
\label{def:PAG}
Let $[\mathcal{M}]$ be the Markov equivalence class of a MAG $\mathcal{M}$. A partial ancestral graph (PAG) for $[\mathcal{M}]$ is a graph $P$ with possibly six kinds of edges: $\rule[0.5ex]{0.3cm}{0.1mm},\rightarrow, \leftrightarrow, \circ\!\rule[0.5ex]{0.3cm}{0.1mm},\circ\rule[0.5ex]{0.3cm}{0.1mm}\circ, \circ\!\!\rightarrow$, such that (1) $P$ has the same adjacencies as $\mathcal{M}$ does; (2) every non-circle mark in $P$ is an invariant mark in $[\mathcal{M}]$. If it is furthermore true that (3) every circle in $P$ corresponds to a variant (indeterminate) mark in $[\mathcal{M}]$, $P$ is called the maximally informative PAG. 
\end{definition}

\begin{figure}[t]
    \centering
    \vspace{-0.2cm}
    \includegraphics[width=0.9\linewidth]{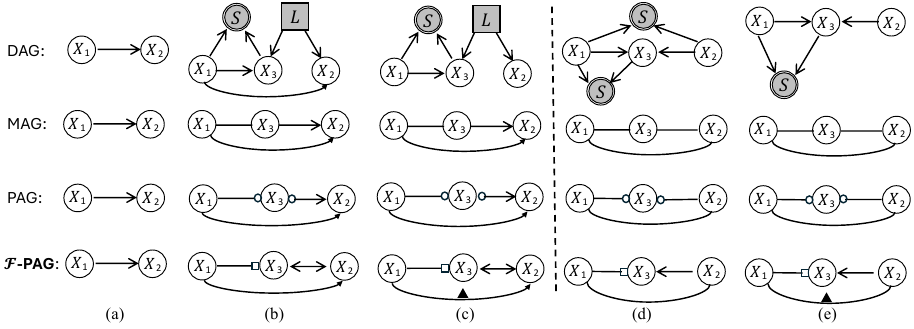}
    \caption{Illustrations of $\mathcal{F}$-PAG graphical representation for the $\mathcal{FI}$-Markov equivalence class.}
    \label{fig:PAG}
    \vspace{-0.2cm}
\end{figure}

Although PAG is a general framework for the graphical representation of DAGs under selection bias and latent confounders, it is designed for ECs, which are maximally informative in CI relations from observational data (v-structure induced independence). This limitation usually results in dependencies induced by broad inducing paths represented by $\circ\rule[0.5ex]{0.3cm}{0.1mm}\circ$. With the discussed characterizations of Markov properties in \cref{sec:3.2}, \textbf{PAG is too broad to represent $\mathcal{FI}$-Markov equivalence}. For example, in \cref{fig:PAG}(b) and (c), the presence or absence of a causal link between $X_1$ and $X_2$ results in the same PAG. However, \cref{fig:PAG}(c) can be distinguished with interventional data as discussed in \cref{fig:MP} (b). To describe the reduced $\mathcal{FI}$-Markov equivalence, we proposed the $\mathcal{F}$-PAG defined as follows:\looseness=-1
\begin{definition}[\textbf{$\mathcal{F}$-Partial Ancestral Graph}]
\label{def:induce_ancestral}
A $\mathcal{F}$-Partial Ancestral Graph, denoted as $\mathcal{G}_{p}$, is a graphical representation derived from a DAG with latent and selection variables. It captures conditional independence relationships and consists of four types of marks (tail $\rule[0.5ex]{0.3cm}{0.1mm}$, arrowhead $\scalebox{0.6}{$>$}$, square $\scalebox{0.6}{$\scriptstyle\square$}$, and circle $\circ$), and eight types of edges ($\overset{\scalebox{1}{$\scriptstyle\blacktriangle$}}{\rightarrow},\circminus,\rule[0.5ex]{0.3cm}{0.1mm},\rightarrow, \leftrightarrow, \scalebox{0.6}{$\square$}\rule[0.5ex]{0.3cm}{0.1mm},\scalebox{0.6}{$\square$}\rule[0.5ex]{0.3cm}{0.1mm}\scalebox{0.6}{$\square$}, \scalebox{0.6}{$\square$}\!\!\rightarrow, \circ\!\rule[0.5ex]{0.3cm}{0.1mm},\circ\rule[0.5ex]{0.3cm}{0.1mm}\circ, \circ\!\!\rightarrow$). 
\end{definition}
The mark $\scalebox{0.6}{$\square$}$ denotes a node with at least one tail and at least one arrowhead, $\overset{\scalebox{1}{$\scriptstyle\blacktriangle$}}{\rightarrow}$ (\cref{fig:PAG}(c)) and $\circminus$ (\cref{fig:PAG}(e)) represent inducing paths that have the same CI patterns with $\rightarrow, \rule[0.5ex]{0.3cm}{0.1mm}$, but without a direct causal link and selection separately in between. These two types of inducing paths can be identified only through the graphical conditions involving \textbf{Type~I} inducing nodes defined as follows:
\begin{definition}[\textbf{Inducing Node}]
\label{def:inducing_node}
In an $\mathcal{F}$-PAG, the nodes are referred to as inducing nodes if and only if the non-endpoint nodes on the inducing path are characterized either by an incoming arrowhead into a square ($\rightarrow\scalebox{0.6}{$\square$}$ \textbf{Type~I}) or by adjacent two squares ($\scalebox{0.6}{$\square$}\scalebox{0.6}{$\square$}$ \textbf{Type~II}).
\end{definition}

For example, in \cref{fig:PAG}(b), non-endpoint node $X_3$ is a \textbf{Type~I} inducing node. With the advanced graphical representation in place, $\mathcal{FI}$-Markov equivalence can be expressed more clearly, allowing us to distinguish whether the observed dependence arises from genuine causal relations (a) and (b), direct selection (d), or from equivalent inducing paths (c) and (e) as shown in \cref{fig:PAG}.

\vspace{-0.15cm}
\section{algorithm: $\mathcal{F}$-FCI}
\label{sec:4}
\vspace{-0.15cm}
In this section, we propose a novel Algorithm \ref{algo:just_algo}, named $\mathcal{F}$ine-graind FCI ($\mathcal{F}$-FCI). Using Markov properties of the augmented DAG in \cref{sec:3}, $\mathcal{F}$-FCI learns causal relations, latent confounders, and post-treatment selection up to the $\mathcal{FI}$-Markov equivalence class, from both observational and interventional data. We assume faithfulness, i.e., no CIs beyond those implied by the graph.

\begin{algorithm}[t]
\caption{$\mathcal{F}$-FCI: Algorithm for learning $\mathcal{F}$-PAG}
\label{algo:just_algo}
\KwIn{Observational and interventional data $\{p^{(k)}\}_{k=0}^K$ over $X_{[N]}$ with interventional targets $\mathcal{I}$.\looseness=-1}
\KwOut{A fine-grained partially ancestral graph ($\mathcal{F}$-PAG $\mathcal{G}_{p}$) over vertices $X$.}

\SetKwFunction{CI}{CI}
\SetKwFunction{Adj}{Adj}
\SetKwFunction{FCIske}{FCI$_{ske}$}

\textbf{Step 1: Get skeleton from pure observational data.} $\mathcal{G}_{p}^{(0)} \gets \FCIske(p^{(0)}))$.

\textbf{Step 2: Get $\mathcal{F}$-PAG orientation over $X_{\bigcup\mathcal{I}}$ from interventional data.}  
\For{$1 \leq i < j \leq K$}{
\textbf{Step 2.1: Capture CI patterns between $\psi$ and $X_{\bigcup\mathcal{I}}$.}

\ForEach{condition set $C \subseteq \{X_n:X_n \in AllPaths(\mathcal{G}_{p}^{(0)},X_{\mathcal{I}^{(i)}},X_{\mathcal{I}^{(j)}})\} \setminus \{X_{\mathcal{I}^{(i)}},X_{\mathcal{I}^{(j)}}\}$}
{ \textit{CIs} = \{
 \CI($\psi_{I^{(i)}}, X_{I^{(j)}} \mid C$), \CI($\psi_{I^{(i)}}, X_{I^{(j)}} \mid X_{I^{(i)}},C$),
 \CI($\psi_{I^{(j)}}, X_{I^{(i)}} \mid C$), 
 
 \CI($\psi_{I^{(j)}}, X_{I^{(i)}} \mid X_{I^{(j)}},C$)\}, \textbf{If} no more paths can be blocked \textbf{then} break;
}

\textbf{Step 2.2: Orient the skeleton between $ X_{I^{(i)}}$ and $ X_{I^{(j)}}$.}

\qquad \textbf{if} $\textit{CIs}== (\dep,\indep,\indep,\dep)$ \textbf{then} Orient $X_{I^{(i)}} \rightarrow X_{I^{(j)}}$

\qquad \textbf{if} $\textit{CIs}== (\indep,\dep,\indep,\dep)$ \textbf{then} Orient $X_{I^{(i)}} \leftrightarrow X_{I^{(j)}}$

\qquad \textbf{if} $\textit{CIs}== (\dep,\dep,\indep,\dep)$ \textbf{then} Orient $X_{I^{(i)}} \circ\!\!\rightarrow X_{I^{(j)}}$

\qquad \textbf{if} $\textit{CIs}== (\dep,\indep,\dep,\indep)$ \textbf{then} Orient $X_{I^{(i)}} \rule[0.5ex]{0.3cm}{0.1mm} X_{I^{(j)}}$

\qquad \textbf{if} $\textit{CIs}== (\dep,\indep,\dep,\dep)$ \textbf{then} Orient $X_{I^{(i)}} \rule[0.5ex]{0.3cm}{0.1mm}\scalebox{0.6}{$\square$} X_{I^{(j)}}$

\qquad \textbf{if} $\textit{CIs}== (\dep,\dep,\dep,\dep)$ \textbf{then} Orient $X_{I^{(i)}} \scalebox{0.6}{$\square$}\rule[0.5ex]{0.3cm}{0.1mm}\scalebox{0.6}{$\square$} X_{I^{(j)}}$

\textbf{Step 2.3: Refine the orientation.} 
Identify causal relations in between for $X_{I^{(i)}} \circ\!\!\rightarrow X_{I^{(j)}}$. 
\ForEach{inducing path between the node pairs with interventional data ($X_{I^{(i)}}, X_{I^{(j)}}$) marked with $\rightarrow$ or $\rule[0.5ex]{0.3cm}{0.1mm}$ from \textbf{Step 2.2}}{Detect if the path has non-endpoints vertex and \textbf{Type~I} inducing nodes.

\textbf{If} $\exists$ \textbf{Type~I} inducing node $X_n$ with $X_{I^{(i)}} \rightarrow X_n \scalebox{0.6}{$\square$}\rule[0.5ex]{0.3cm}{0.1mm} X_{I^{(j)}}$ \textbf{then} \CI($\psi_{n}$, $X_{I^{(i)}}$).

\qquad \textbf{If} $\psi_{n} \indep X_{I^{(i)}}$ \textbf{then} update $X_{I^{(i)}} \rule[0.5ex]{0.3cm}{0.1mm} X_{I^{(j)}}$ to $X_{I^{(i)}} \circminus X_{I^{(j)}}$.

\textbf{If} $\exists$ \textbf{Type~I} inducing node $X_n$ with $X_{I^{(i)}} \rule[0.5ex]{0.3cm}{0.1mm}\scalebox{0.6}{$\square$} X_n \leftrightarrow X_{I^{(j)}}$ \textbf{then} \CI($\psi_{n}$, $X_{I^{(j)}}$).

\qquad \textbf{If} $\psi_{n} \indep X_{I^{(j)}}$ \textbf{then} update $X_{I^{(i)}} \rightarrow X_{I^{(j)}}$ to $X_{I^{(i)}} \overset{\scalebox{1}{$\scriptstyle\blacktriangle$}}{\rightarrow} X_{I^{(j)}}$.
}

\textbf{Step 2.4: Update the $\mathcal{F}$-PAG.} $\mathcal{G}_{p}^{(k)} \gets \mathcal{G}_{p}^{(k-1)}$ 
}

\textbf{Step 3: Get $\mathcal{F}$-PAG orientation over $X_{[N]/\bigcup\mathcal{I}}$.} Apply the orientation rules of FCI among $X_{[N]/\bigcup\mathcal{I}}$ and apply the invariance rules in \cref{thm:CI} (the $\textit{see-see}$, $\textit{do-see}$, and $\textit{do-do}$ rules in \citep{kocaoglu2019characterization}) between $X_{\bigcup\mathcal{I}}$ and $X_{[N]/\bigcup\mathcal{I}}$ in $\mathcal{G}_{p}^{(K)}$. Update $\mathcal{G}_{p} \gets \mathcal{G}_{p}^{(K)}$

\KwRet $\mathcal{G}_{p}$
\end{algorithm}

The first step is to recover the undirected skeleton from observational data $p^{(0)}$, since it yields the sparsest graph encoding all inducing paths. We then adopt the standard constraint-based skeleton discovery procedure (e.g., as in FCI) under our general graphical assumptions to obtain this skeleton. Step 2 consists of four sub-steps. \textbf{Step 2.1} captures CI patterns reflecting marginalized changes between observational $p^{(0)}$ and interventional data $\{p^{(k)}\}_{k=1}^K$. In \textbf{Step 2.2}, we then leverage the captured CI patterns to orient edges incident to the intervened variables $X_{\bigcup\mathcal{I}}$, using the orientation rules summarized in Figure \ref{fig:MP}. In particular, the rule $\circ\!\!\rightarrow$ uses $\circ$ instead of $\scalebox{0.6}{$\scriptstyle\square$}$ because the existence of an inducing path beginning with a tail in between is uncertain when based solely on the marginalized changes between observation and intervention. For example, the structure $\psi_1 \rightarrow X_1 \leftarrow L \rightarrow X_2$. When $X_1$ is under selection, one observes $\psi_1 \dep X_2$ regardless of whether $X_1$ is conditioned on. Then, this structure cannot be distinguished from the structure of the latent with causal relation in Figure \ref{fig:MP}(d), represented by $\circ\!\!\rightarrow$.\looseness=-1

To go beyond CI patterns and identify real structure, in \textbf{Step 2.3}, $\mathcal{F}$-FCI firstly addresses the uncertainty of $\circ\!\!\rightarrow$ by blocking selection on latent confounders via marginalized changes from two hard interventions. Then, \textbf{Type~I} inducing nodes along inducing paths between intervened variables are detected to disambiguate cases that CI patterns of endpoints alone cannot distinguish. This procedure is valid for inducing paths containing more than one non-endpoint vertex and including at least one \textbf{Type~I} inducing node. For example, graph (b) in Figure \ref{fig:MP} share the identical CI patterns regardless of whether a direct or indirect causal link exists between $X_1$ and $X_2$. By utilizing changes of hard intervention on the \textbf{Type~I} inducing node $X_3$, we can test for $\psi_3 \indep X_2 \mid 
S$ to determine whether a true causal relation exists. Likewise, direct selection (Figure \ref{fig:MP}(f)) becomes identifiable under the same rationale. Specialized edge marks $\overset{\scalebox{1}{$\scriptstyle\blacktriangle$}}{\rightarrow}$ and $\circminus$ are established to represent the inducing paths in \cref{fig:PAG}.\looseness=-1

By explicitly orienting edges between intervened node pairs, the core contribution of our algorithm, we subsequently apply the standard FCI orientation rules and rules of invariance to all remaining edges: those between intervened and unintervened nodes, as well as those among unintervened nodes in \textbf{Step 3}. The extra orientations recovered from interventional data furnish richer structural information than v-structures identified from only observational data. 

\begin{restatable}[Soundness of $\mathcal{F}$-FCI]{theorem}{SOUNDNESS}
\label{thm:soundness}
    Let ${\mathcal{G}}_{p}$ be the output $\mathcal{F}$-PAG of Algorithm \ref{algo:just_algo} with oracle CI tests on multi-distribution data $\{p^{(k)}\}_{k=0}^K$ given by $(\mathcal{G}, \mathcal{I})$. ${\mathcal{G}}_{p}$ is consistent with augmented DAG $Aug_{\mathcal{I}}(\mathcal{G})$ in arrowhead, tails, square, and structures of paths $\overset{\scalebox{1}{$\scriptstyle\blacktriangle$}}{\rightarrow},\circminus$ among intervened variables.\looseness=-1
\end{restatable}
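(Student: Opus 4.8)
The plan is to verify, step by step, that each orientation produced by Algorithm \ref{algo:just_algo} under oracle CI tests agrees with the corresponding mark or substructure in the augmented MAG $\mathcal{M}(Aug_{\mathcal{I}}(\mathcal{G}))$, and hence with $Aug_{\mathcal{I}}(\mathcal{G})$ in the relevant sense. First I would establish the base case: the skeleton output by $\mathrm{FCI}_{ske}(p^{(0)})$ in Step 1 is correct, i.e., $X_i$ and $X_j$ are adjacent in $\mathcal{G}_p^{(0)}$ iff they have an inducing path in $Aug_{\mathcal{I}}(\mathcal{G})$. This is immediate from \cref{lemma:dependent} together with the standard soundness of constraint-based skeleton discovery under faithfulness and the global Markov property (which, as noted in \S\ref{sec:3.2}, holds for augmented DAGs exactly as for ordinary DAGs); the only subtlety is that the conditioning sets used by FCI must include the selection variable conditioning $S=\mathbf 1$, which is a standing assumption of the setup.

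Next I would handle the Step 2.2 orientations between each intervened pair $(X_{I^{(i)}}, X_{I^{(j)}})$. The key is to map each of the six CI-pattern cases to the mark configuration it forces on the edge via Lemmas \ref{lemma:marginal} and \ref{lemma:conditional}. Concretely: a tail at $X_{I^{(i)}}$ (resp.\ an arrowhead) is equivalent, by \cref{lemma:marginal} (resp.\ \cref{lemma:conditional}), to ``every inducing path between $X_{I^{(i)}}$ and $X_{I^{(j)}}$ begins with a tail (resp.\ arrowhead)'', which by \cref{thm:CI} is witnessed exactly by the pattern $\psi_{I^{(i)}}\indsep X_{I^{(j)}}$ together with $\psi_{I^{(i)}}\indep X_{I^{(j)}}\mid X_{I^{(i)}}$ (for the tail) versus $\psi_{I^{(i)}}\dep X_{I^{(j)}}\mid X_{I^{(i)}}$ (for the arrowhead). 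The $\scalebox{0.6}{$\square$}$ mark, meaning ``at least one inducing path begins with a tail and at least one begins with an arrowhead,'' is exactly the case where $\psi_{I^{(i)}}\dep X_{I^{(j)}}$ persists both with and without conditioning on $X_{I^{(i)}}$; and $\circ$ (the $\circ\!\!\rightarrow$ case) covers the residual ambiguity flagged in the running text after Algorithm \ref{algo:just_algo} — when a tail-started inducing path cannot be ruled out because of a selected collider such as $\psi_1\to X_1\leftarrow L\to X_2$ with $X_1$ selected. I would also need to argue the conditioning sets ranged over in Step 2.1 (subsets of $AllPaths(\mathcal{G}_p^{(0)},\cdot,\cdot)$) are rich enough to realize these d-separations whenever they hold — this follows because any inducing path and any blocking set for it lives inside the relevant adjacency structure already recovered in Step 1, so restricting to $AllPaths$ loses nothing.

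Then comes Step 2.3, which I expect to be the main obstacle. Here one must show that, for an inducing path carrying a provisional $\rightarrow$ or $\rule[0.5ex]{0.3cm}{0.1mm}$ mark and containing a \textbf{Type~I} inducing node $X_n$ (configuration $X_{I^{(i)}}\to X_n\,\scalebox{0.6}{$\square$}\!\rule[0.5ex]{0.3cm}{0.1mm}\,X_{I^{(j)}}$ or $X_{I^{(i)}}\rule[0.5ex]{0.3cm}{0.1mm}\scalebox{0.6}{$\square$}\,X_n\leftrightarrow X_{I^{(j)}}$), the single extra test $\mathrm{CI}(\psi_n, X_{I^{(i)}})$ or $\mathrm{CI}(\psi_n,X_{I^{(j)}})$ correctly decides whether a \emph{direct} causal edge (resp.\ a direct selection structure) is present, so that the refined marks $\overset{\scalebox{1}{$\scriptstyle\blacktriangle$}}{\rightarrow}$ and $\circminus$ are sound. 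The argument is that by \cref{def:inducing_node} the non-endpoint $X_n$ lies on the inducing path with an arrowhead into its square mark, so $X_n$ is an ancestor of $X_{I^{(i)}}$ or $X_{I^{(j)}}$ (or selected) along that path; intervening on $X_n$ then propagates a marginal change to the relevant endpoint \emph{unless} the only route from $X_n$ to that endpoint is the one passing through the putative direct link, in which case the change is screened off exactly when no such direct link exists. Making this precise requires a careful case analysis of how a $\psi_n$-rooted path can reach the endpoint in $Aug_{\mathcal{I}}(\mathcal{G})$, invoking \cref{lem:additional} to control the extra dependencies that selection can create, and checking that the two configurations in Step 2.3 exhaust the situations in which $\overset{\scalebox{1}{$\scriptstyle\blacktriangle$}}{\rightarrow}$ or $\circminus$ is warranted. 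Finally, Step 3 is routine: once the edges and marks among $X_{\bigcup\mathcal{I}}$ are correct, applying the standard FCI orientation rules (sound under m-separation by \citealt{zhang2008completeness}) together with the \emph{see-see}/\emph{do-see}/\emph{do-do} invariance rules of \citet{kocaoglu2019characterization} — whose soundness transfers verbatim since \cref{thm:CI} gives exactly the invariance facts they rely on — yields no incorrect arrowheads or tails elsewhere, completing the induction on intervention index $k$ and hence the proof.
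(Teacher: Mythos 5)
Your overall strategy (case analysis over mark types, driven by \cref{lemma:marginal} and \cref{lemma:conditional} plus faithfulness, then treating Step 2.3 and Step 3 separately) is the same route the paper takes, but there are two genuine gaps. First, your treatment of the square mark asserts an exact equivalence: that $\scalebox{0.6}{$\square$}$ at $X_{I^{(i)}}$ (``at least one inducing path begins with a tail and at least one with an arrowhead'') holds precisely when $\psi_{I^{(i)}} \dep X_{I^{(j)}}$ persists with and without conditioning on $X_{I^{(i)}}$. This equivalence is false in general, and the counterexample is exactly the configuration the paper devotes the bulk of its square case to (\cref{fig:special_case}): when the intervened node $X_i$ is itself selected ($X_i \rightarrow S$) and also receives an arrowhead from a latent confounder $L$ that points to $X_j$, conditioning on $S$ (a descendant of the collider $X_i$) makes $\psi_i \dep X_j$ and $\psi_i \dep X_j \mid X_i$ even though \emph{no} inducing path between $X_i$ and $X_j$ begins with a tail, so the true MAG mark at $X_i$ is an arrowhead. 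This is precisely why the theorem statement restricts the soundness claim to squares occurring in $\rule[0.5ex]{0.3cm}{0.1mm}\scalebox{0.6}{$\square$}$ and $\scalebox{0.6}{$\square$}\rule[0.5ex]{0.3cm}{0.1mm}\scalebox{0.6}{$\square$}$ edges (and why the algorithm outputs $\circ\!\!\rightarrow$ rather than a square in the ambiguous pattern). Your proof, as written, would establish the unrestricted claim, which is too strong; you mention the $\psi_1\to X_1\leftarrow L\to X_2$ ambiguity only in passing under the $\circ\!\!\rightarrow$ case and never connect it to the qualification in the theorem or argue why the Y-structure's extra dependence does not distort the squares in $\rule[0.5ex]{0.3cm}{0.1mm}\scalebox{0.6}{$\square$}$ and $\scalebox{0.6}{$\square$}\rule[0.5ex]{0.3cm}{0.1mm}\scalebox{0.6}{$\square$}$ (the paper argues this via the coexisting genuine inducing paths starting with a tail and with an arrowhead).

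Second, the Step 2.3 soundness of $\overset{\scalebox{1}{$\scriptstyle\blacktriangle$}}{\rightarrow}$ and $\circminus$ --- which you yourself flag as ``the main obstacle'' --- is left as a plan (``requires a careful case analysis'') rather than carried out. The needed argument is short but essential: in the configurations the algorithm tests, a direct edge (respectively a direct selection structure) between the endpoints would keep $\psi_n$ d-connected to the tested endpoint through the \textbf{Type~I} node's selection/collider structure, so the observed independence $\psi_n \indep X_{I^{(i)}}$ (or $\psi_n \indep X_{I^{(j)}}$) rules it out; conversely, when the path contains only \textbf{Type~II} inducing nodes (\cref{fig:node}(b),(d)), the marginal of both endpoints changes under intervention on $X_n$ regardless of whether the direct link or selection is present, so no refinement is licensed --- which is exactly why the refinement rule is gated on the existence of a \textbf{Type~I} node. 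Without this case analysis (and the restriction to \textbf{Type~I} nodes) the soundness of the two new edge types is not established, so the proposal as it stands does not prove the theorem.
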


\begin{restatable}[Completeness of  $\mathcal{F}$-FCI]{theorem}{COMPLETENESS}
\label{thm:completeness}
     Let $\hat{\mathcal{G}}_{p}$ be the output of Algorithm \ref{algo:just_algo} with oracle CI tests on multi-distribution data $\{p^{(k)}\}_{k=0}^K$ given by $(\mathcal{G}, \mathcal{I})$. Each type of substructures represented by tail, arrowhead, square, $\overset{\scalebox{1}{$\scriptstyle\blacktriangle$}}{\rightarrow}, and \circminus$ between a pair of intervened nodes in the corresponding augmented DAG of $\hat{\mathcal{G}}_p$ can be identified by different types of CI patterns. 
\end{restatable}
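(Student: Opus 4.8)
The plan is to obtain completeness as the conjunction of three facts about a single intervened pair: \emph{realizability} (each listed substructure is attained by some augmented DAG), \emph{injectivity} (the CI/invariance signature read off by \cref{algo:just_algo} differs across these substructures), and \emph{exhaustiveness} (no realizable signature escapes the case split of \textbf{Step 2.2}--\textbf{2.3}). Together with soundness (\cref{thm:soundness}), which already guarantees that the algorithm attaches the \emph{correct} label to each signature, these three facts give that $\hat{\mathcal{G}}_p$ recovers every invariant mark between intervened nodes and leaves a circle or square only where the mark is genuinely variant in the $\mathcal{FI}$-Markov class --- exactly the completeness statement, in view of the equivalence-class characterization of \cref{theorem:EC} (whose invariant content among intervened nodes is precisely what is being recovered).

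First I would fix an intervened pair and, without loss of generality, take singleton targets $I^{(i)}=\{a\}$, $I^{(j)}=\{b\}$; if $X_a,X_b$ are non-adjacent in the augmented MAG $\mathcal{M}(Aug_{\mathcal{I}}(\mathcal{G}))$ there is nothing to prove, so assume the edge $e$ between them exists. By \cref{lemma:dependent} the edge $e$ witnesses at least one inducing path, and by \cref{lemma:marginal,lemma:conditional} its mark at $X_a$ is a tail, an arrowhead, or a square according to whether every inducing path between $X_a$ and $X_b$ leaves $X_a$ with a tail, every one leaves it with an arrowhead, or the two kinds coexist; symmetrically at $X_b$. The refined structures $\overset{\scalebox{1}{$\scriptstyle\blacktriangle$}}{\rightarrow}$ and $\circminus$ are, by \cref{def:inducing_node,def:induce_ancestral}, the sub-cases of $\rightarrow$ and $\rule[0.5ex]{0.3cm}{0.1mm}$ in which the connecting inducing path runs through a \textbf{Type~I} inducing node and carries, respectively, no direct causal edge and no direct common selection child between $X_a$ and $X_b$. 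This enumerates the substructure types to be separated.

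Next I would translate the tests of \textbf{Step 2.1}--\textbf{2.3} into m-separation statements. Since the conditioning sets iterated in \textbf{Step 2.1} range over non-endpoint nodes on $X_a$--$X_b$ paths, a suitable $C$ blocks every inducing path between $X_a$ and $X_b$ except the MAG edge $e$ (which, being a MAG edge, survives all conditioning), so it suffices to analyse the two length-two paths $\psi_a\to X_a\, e\, X_b$ and $\psi_b\to X_b\, e\, X_a$ with $S$ and $C$ in the separating set, using \cref{thm:CI} to convert d-separation into invariance/variability of the relevant marginals and conditionals. Carrying this out yields exactly the six signatures of \textbf{Step 2.2}: $(\dep,\indep,\indep,\dep)$ for $\rightarrow$, $(\indep,\dep,\indep,\dep)$ for $\leftrightarrow$, $(\dep,\dep,\indep,\dep)$ for $\circ\!\!\rightarrow$, $(\dep,\indep,\dep,\indep)$ for $\rule[0.5ex]{0.3cm}{0.1mm}$, $(\dep,\indep,\dep,\dep)$ for $\rule[0.5ex]{0.3cm}{0.1mm}\scalebox{0.6}{$\square$}$, and $(\dep,\dep,\dep,\dep)$ for $\scalebox{0.6}{$\square$}\rule[0.5ex]{0.3cm}{0.1mm}\scalebox{0.6}{$\square$}$; the same derivation shows that the only way an arrowhead endpoint can produce the $(\dep,\dep)$ coordinate of a square is through collider-opening by selection ancestry (the $\psi_1\!\to\! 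X_1\!\leftarrow\! L\!\to\! X_2$ with $X_1$ selected example), which is precisely why the $\circ\!\!\rightarrow$ branch writes $\circ$ rather than $\scalebox{0.6}{$\scriptstyle\square$}$. Finally \textbf{Step 2.3} appends the coordinate $\psi_n\indep X_\bullet$ at the \textbf{Type~I} inducing node $X_n$, which is independent precisely when the candidate direct link is absent, splitting $\rightarrow$ into $\{\rightarrow,\overset{\scalebox{1}{$\scriptstyle\blacktriangle$}}{\rightarrow}\}$ and $\rule[0.5ex]{0.3cm}{0.1mm}$ into $\{\rule[0.5ex]{0.3cm}{0.1mm},\circminus\}$.

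To close, I would check pairwise distinctness --- immediate from the table, the two refinements being separated from each other and from the remaining four by the extra coordinate --- and exhaustiveness: every adjacent intervened pair has an inducing path (\cref{lemma:dependent}), so at least one of tests~1 and~3 returns $\dep$ and the all-$\indep$ tuple never occurs; a pure-tail endpoint forces $(\dep,\indep)$ on its two tests, a pure-arrowhead endpoint forces $(\indep,\dep)$ or, under selection ancestry, $(\dep,\dep)$, so the combinatorially remaining tuples are graphically infeasible and the six branches of \textbf{Step 2.2} are complete. Combining realizability, injectivity and exhaustiveness with \cref{thm:soundness} gives that each substructure type on an intervened pair is uniquely recovered by its CI pattern, which is the assertion of the theorem. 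The main obstacle is the selection-ancestry bookkeeping inside the translation step: proving that ``arrowhead endpoint $+$ ancestor of a selection variable'' collapses onto the square signature and onto nothing else --- so the $\circ$ is forced but no further collapse occurs --- requires a careful case analysis of whether each of $X_a,X_b$ lies on an inducing path as a collider or a non-collider and of its ancestral relation to $S$; a lighter secondary point is verifying that a single conditioning set $C$ enumerated in \textbf{Step 2.1} can block all ``side'' inducing paths simultaneously without ever disconnecting $e$, which follows from maximality of the MAG but should be spelled out.
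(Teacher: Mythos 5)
Your proposal is correct and takes essentially the same route as the paper's own proof: both rest on \cref{lemma:dependent,lemma:marginal,lemma:conditional} to show that tail, arrowhead, and square endpoints correspond to distinct CI/invariance signatures among intervened pairs, both treat the selection-ancestry special case as collapsing onto the square/circle signature rather than breaking identifiability, and both then obtain $\overset{\scalebox{1}{$\scriptstyle\blacktriangle$}}{\rightarrow}$ and $\circminus$ from the detectability of \textbf{Type~I} inducing nodes once the node marks are fixed. The only differences are presentational: you spell out the injectivity and exhaustiveness of the \textbf{Step~2.2} signature table explicitly (the paper leaves this implicit in its soundness case analysis), while the paper additionally remarks on marks at unintervened nodes via prior FCI completeness results, which lies outside the stated claim.
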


\vspace{-0.35cm}
\section{Experiments}
\label{sec:5}
\vspace{-0.2cm}
In this section, we present empirical studies on simulations and real-world data to demonstrate that $\mathcal{F}$-FCI identifies causal relations in the presence of latent confounders and post-treatment selection.\looseness=-1
\subsection{Simulations}
We conduct simulations to validate the effectiveness of our proposed $\mathcal{F}$-FCI. We compare $\mathcal{F}$-FCI against strong baselines in interventional causal discovery, including GIES \citep{hauser2012characterization}, IGSP \citep{wang2017permutation}, UT-IGSP \citep{squires2020permutation}, JCI-GSP \citep{mooij2020joint}, FCI with interventional data (FCI-interven)~\citep{kocaoglu2019characterization}, and CDIS \citep{dai2025selection}.

Following the data-generating procedure in \cref{def:interventional_CG}, we begin by randomly sampling Erdös–Rényi graphs with an average degree of 2 as the ground truth DAG for $\{X_i\}_{i=1}^N$. We then randomly generate 2 or 3 selection variables, each with two randomly chosen parents from $\{X_i\}_{i=1}^N$, and 2 or 3 latent confounders, each with two randomly chosen children. Finally, we simulate general SEMs $X_i = f(\hat{X}_{pa_{\mathcal{G}}(i)}) + \epsilon_i$, $\epsilon_i$ is sampled from $Unif([0,2]\cup[2,4])$, and select samples with $\sum f_s(X_i)$ that fall within a predefined interval, where $f$ and $f_s$ are randomly drawn from $\operatorname{linear}$, $\operatorname{square}$, $\operatorname{sin}$ and $\operatorname{tanh}$.\looseness=-1

\begin{figure}
    \centering
    \includegraphics[width=0.8\linewidth]{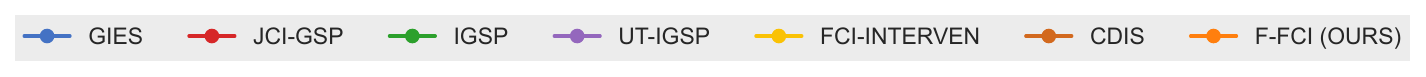}
    \vspace{-0.2cm}
    \includegraphics[width=0.495\linewidth]{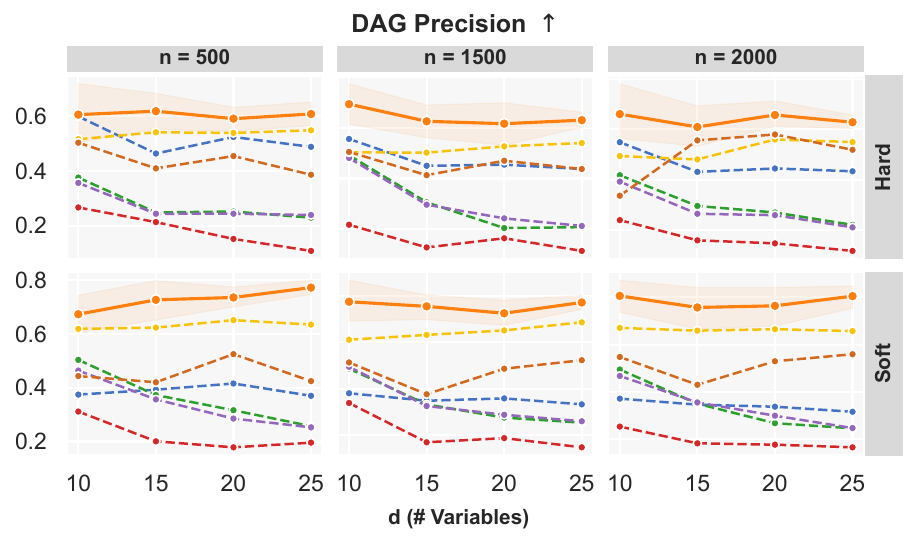}
    \includegraphics[width=0.495\linewidth]{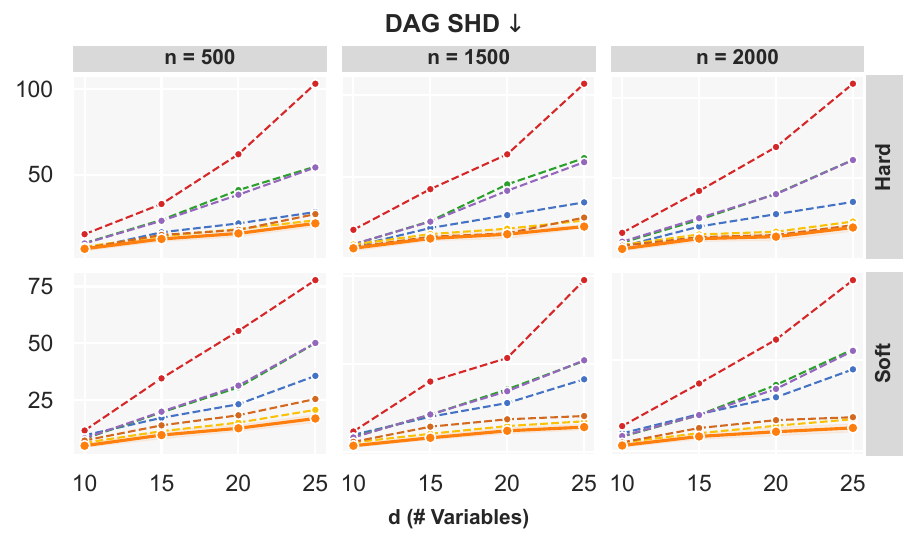}
    \caption{Comparison results in identifying causal relations under DAG Precision and DAG SHD metrics. All values are averaged over 10 graphs. Error bars represent the 95\% confidence interval.}
    \label{fig:ex_result}
\end{figure}


To evaluate the effectiveness of $\mathcal{F}$-FCI \footnote{A Python implementation of $\mathcal{F}$-FCI is available at \url{https://github.com/GongxuLuo/F-FCI}.} in identifying causal relations despite the presence of latent confounders and post-treatment selection, we report the main Precision and Structural Hamming Distance (SHD) metrics compared with baseline methods in \cref{fig:ex_result} (the F1-score and recall are given in \cref{fig:ex_result_other} in Appendix \ref{app:experiments}). The experimental results demonstrate that $\mathcal{F}$-FCI outperforms baselines with an average precision of over 5\% in most configurations and lower SHD. These observations validate the effectiveness of $\mathcal{F}$-FCI in identifying true causal relations, whereas baselines may infer spurious ones induced by latent confounders and post-treatment selection. Moreover, the \textbf{ robustness of $\mathcal{F}$-FCI under different noise levels} is evaluated in \cref{robustness}, the \textbf{scalability} is evaluated in \cref{fig:fpag}, and \textbf{its ability to distinguish post-treatment selection} is assessed in \cref{table1}.\looseness=-1

\subsection{Real-World Applications}
We evaluate the gene regulatory networks (GRNs) of genes using large-scale single-cell gene perturbation data collected from Human Lung Epithelial Cells (HLEC), i.e., Norman datasets \citep{norman2019exploring}. We report both the regulatory (causal) links and the spurious dependencies induced by post-treatment selection, as identified by $\mathcal{F}$-FCI in \cref{fig:norman}, and detailed analysis can be found in Appendix \ref{app:real-world}. Experimental results are evaluated using prior knowledge provided by Enrichr, a tool that compiles extensive libraries from enrichment experiments~\citep{kuleshov2016enrichr,xie2021gene}.\looseness=-1
\section{Conclusion and limitations}
We introduce a fundamental yet underexplored challenge for causal discovery: post-treatment selection, particularly the often-overlooked quality control constraint that shapes dependencies. We show why existing models fail to handle such bias, propose a new formulation to model post-treatment selection, establish criteria for a novel fine-grained interventional Markov equivalence, and define a corresponding graphical representation. Building on this formulation, we develop a sound and complete algorithm, named $\mathcal{F}$-FCI, that uncovers causal relations and post-treatment selection. Empirical analyses on synthetic and large-scale real-world datasets demonstrate the effectiveness of $\mathcal{F}$-FCI in accurate and robust causal discovery. 

The identification of direct causal links and selection structures depends critically on the presence of \textbf{Type I} inducing nodes. One future direction is how to identify the causal structure along inducing paths composed solely of \textbf{Type II} inducing nodes. In addition, as discussed in \citep{luo2025gene}, biological constraints filter out cells and introduce extra dependencies; another extension can be how to distinguish biological constraints from post-treatment selection. \looseness=-1


\bibliography{iclr2026_conference}
\bibliographystyle{iclr2026_conference}

\newpage
\appendix

\section{Definitions}\label{app:def}
Over the past decades, comprehensive causal discovery frameworks have been established. We now introduce the basic concepts and fundamental definitions for the general setting involving latent confounders and selection bias. To represent the general graph with latent factors, selection bias, and loops, the \textbf{mixed} graph is first designed with direct $\rightarrow$, undirected $\rule[0.5ex]{0.3cm}{0.1mm}$, and bidirected $\leftrightarrow$ edges for all structural representations. Given a mixed graph $\mathcal{G}_{M}$ and two adjacent vertices $X_i,X_j\in X$. $X_i$ is a parent of $X_j$ and $X_j$ is a child of $X_i$, if $X_i \rightarrow X_j$. $X_i$ is called a spouse of $X_j$ if $X_i\leftrightarrow X_j$ in $\mathcal{G}_{M}$. A direct path from $X_1$ to $X_n$ is a sequence of vertices where each $1\leq i \leq n-1$, $X_i$ is the parent of $X_{i+1}$. $X_i$ is called an ancestor of $X_j$ and $X_j$ is the descendant of $X_i$ if $X_i=X_j$ or there is a directed path from $X_i$ to $X_j$. Let $Ans_{\mathcal{G}_{M}}(X_i)$ denote the set of ancestors of $X_i$ in $\mathcal{G}_{M}$. A directed cycle occurs in $\mathcal{G}_{M}$ when $X_i \rightarrow X_j$ is in $\mathcal{G}_{M}$ and $X_j \in Ans_{\mathcal{G}_{M}}(X_i)$. An almost directed cycle occurs when $X_i \leftrightarrow X_j$ is in $\mathcal{G}_{M}$ and $X_j \in Ans_{\mathcal{G}_{M}}(X_i)$. The ancestral graph is defined in~\cref{def:ancestral} without considering cycles.

\begin{definition}[\textbf{d-separation}]
\label{d-separation}
    If every path from a node in $X$ to a node in $Y$ is d-separated by $Z$, then $X$ and $Y$ are always conditionally independent given $Z$.
\end{definition}
\begin{definition}[\textbf{Ancestral graph}~\citep{zhang2008completeness}]
\label{def:ancestral}
A mixed graph is ancestral if and only if the following conditions hold: (1) there is no directed cycle; (2) there is no almost directed cycle; (3) for any undirected edge $X_i\rule[0.5ex]{0.3cm}{0.1mm} X_j$, $X_i$ and $X_j$ have no parents or spouses.
\end{definition}
The other detailed description of the definitions that characterize the graphical causal model is introduced as follows:
\begin{definition}[\textbf{m-separation}]
\label{def:m-separation}
In a mixed graph, a path $p$ between disjoint subsets of vertices $X_A$ and $X_B$ is active (m-connecting) relative to a set of vertices $X_Z$ ($X_A, X_B \notin X_Z $) if and only if (1) every non-collider on $p$ is not in $X_Z$; (2) every collider on $p$ has a descendant in $X_Z$. $X_A$ and $X_B$ are said to be m-separated by $X_Z$ if and only if there is no active path between them relative to $X_Z$.
\end{definition}

\begin{definition}[\textbf{Maximality}]
\label{def:maximality}
An ancestral graph is said to be maximal if any two non-adjacent vertices, there is a set of vertices that m-separates them.
\end{definition}

\begin{definition}[\textbf{Markov equivalence}]
\label{def:Me}
Two MAGs $\mathcal{M}_1$, $\mathcal{M}_2$ are Markov Equivalent if for any three disjoint sets of vertices $X_A$, $X_B$, $X_C$, $X_A$ and $X_B$ are m-separated by $X_C$ in $\mathcal{M}_1$ if and only if they are m-separated by $X_C$ in $\mathcal{M}_2$ as well.
\end{definition}

\begin{definition}[\textbf{Augmented DAG}]
\label{def:interventional_CG_detail}
For a DAG $\mathcal{G}$ over $X \cup L \cup S$ and intervention target $I \subset [N]$
the \textit{augmented graph} $Aug_{\mathcal{I}}(\mathcal{G})$ is a DAG with vertices $\psi \cup X \cup L \cup S \cup \epsilon \cup \xi$, where:
\begin{itemize}
    \item  $\psi=\{\psi_{I^{(k)}}\}_{k=1}^K$ is a set of exogenous binary indicators for the representation of marginal changes between two environments (observation-intervention or intervention-intervention, two hard interventions are a special case), pointing to the corresponding intervened variable $X_{I^{(k)}}$;
    \item  $X = (X_{i})_{i=1}^{N}$ are the observed variables, pure observational or interventional;
    \item $L = \{L_{1}, L_{2},..., L_{R}\}$ and $S = \{S_{1}, S_{2},..., S_{T}\}$ represent the sets of latent confounders and post-treatment selection respectively in the hidden world that influence the observations;
    \item $\epsilon$ denotes the exogenous noise term for variables, whether observed or hidden.  
\end{itemize}
With the defined variables, $Aug_{\mathcal{I}}(\mathcal{G})$ consists of the following edges:
   \begin{itemize}
       \item Direct causal effect edges: for each $X_i \rightarrow X_j \in \Gcal$ with $i,j\in [N]$, add $X_i \rightarrow X_j \in \mathcal{G}^{Aug}$, for each $X_i \in \mathcal{G}$ with $i\in [N]$ is directly influenced by $L_r$, $r \in [R]$, add $L_r \rightarrow X_i$;
       \item Selection edge: for each $X_i$ with $i \in [N]$ is directly selected by $S_t$, $t \in [T]$, add $X_i \rightarrow S_t$;
       \item Edges representing common exogenous influences: $\{\epsilon_i \rightarrow X_i\}_{i\in [N]}$;
       \item Edges representing mechanism changes due to the intervention: $\{\psi_i \rightarrow X_i\}_{i\in I}$.
   \end{itemize}
\end{definition}

\section{Proofs of Main Results} \label{app:proofs}
Since the technical development of this paper is fundamentally built upon the characterization of the augmented DAG, we focus on proving only those results that do not directly follow from the basic properties of augmented DAGs. Specifically, the results stated in \cref{thm:CI}, \cref{lem:additional}, and \cref{lemma:dependent} are immediate results of the definition of augmented DAGs \citep{spirtes2000causation, korb2004varieties, zhang2008completeness,kocaoglu2019characterization}. Therefore, their proofs are omitted.

\begin{figure}
    \centering
    \includegraphics[width=1\linewidth]{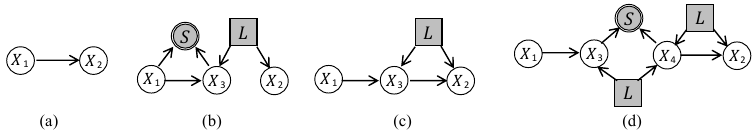}
    \caption{Illustrative examples of inducing paths with $X_1$ is an ancestor of $X_2$ in subfigures(a)$\&$(c) or is ancestrally selected, shown in subfigures (b)$\&$(d).}
    \label{fig:maginal}
\end{figure}
\LEMINDUCEDINPK*
\begin{proof}[Proof of \cref{lemma:marginal}] 
Before proceeding, we note a special case where the marginal distribution changes: when $i \in I$ with $\psi_i \rightarrow X_i$. In this case, it is straightforward that $p^{(0)}(X_i) \neq p^{(k)}(X_i), I^{(k)}=\{i\}$, as $X_i$ is intervened upon. We omit the proof as it is self-evident.
\phantom{}

\begin{itemize}[noitemsep,topsep=0pt,left=10pt]
    \item[1.] \textbf{(`$\Leftarrow$' direction)}
    For any $i,j \in [N]$, the paths between $X_i$ and $X_j$ are inducing paths that start with tails directed outward from $X_i$ toward $X_j$ in the augmented DAG $Aug_{\mathcal{I}}(\mathcal{G})$. Because the inducing path starts with a tail and its definition that every collider is the ancestor of either $X_i$, $X_j$, or $S$, this implies that $X_i$ is either an ancestor of $X_j$ or directly selected, as illustrated in \cref{fig:maginal}. Otherwise, the path will be blocked, conflicting with the definition of the inducing path. Furthermore, only the ancestral relationship and selection propagate distribution changes. Therefore, when $X_i$ is intervened upon, distribution changes propagate along the inducing paths via the outgoing tails from $X_i$, thereby altering the marginal distribution of $X_j$. Whin in the augmented DAG $Aug_{\mathcal{I}}(\mathcal{G})$, $\psi_{i} \rightarrow X_i$, as all inducing paths orient outward from $X_i$ with tails, $\psi_{i} \dep X_j$. Then, $p^{(0)}(X_j) \neq p^{(k)}(X_j), I^{(k)}=\{i\}$.

    \item[2.] \textbf{(`$\Rightarrow$' direction)} By contradiction, suppose $X_i \notin \{Ans_{Aug_{\mathcal{I}}(\mathcal{G})}(X_j) \cup Ans_{Aug_{\mathcal{I}}(\mathcal{G})}(S)\}$, as the paths between $X_i$ and $X_j$ are inducing path, compromising by inducing nodes, that can not be blocked, which requires both collider and tails occurs. 
     \begin{itemize}[noitemsep,topsep=0pt,left=5pt]
     \item When the arrowhead emerges outward from $X_i\rightarrow$, the formation of colliders requires another arrowhead at the same intermediate node from $\leftarrow\!\!\circ X_j$. Then, according to the definition of the inducing path, whether the tail of the middle node from causation or selection, it violates the condition $X_i \notin \{Ans_{Aug_{\mathcal{I}}(\mathcal{G})}(X_j) \cup Ans_{Aug_{\mathcal{I}}(\mathcal{G})}(S)\}$.
     \item When the collider formation is from $X_i \leftrightarrow$ and $\leftarrow X_j$, the tail between the intermediate node and $X_j$ only origin from selection. When the collider formation is from $X_1 \leftrightarrow$ and $\leftrightarrow X_j$, the edge with tails at the collider toward $X_j$ can arise from both causation and selection. Then, the inducing paths starting from $X_i$ with the one inducing node are represented by $X_i \leftrightarrow \scalebox{0.6}{$\square$}\!\!\rightarrow X_j$ and $X_i \leftrightarrow \scalebox{0.6}{$\square$}\rule[0.5ex]{0.25cm}{0.1mm}\scalebox{0.6}{$\square$} X_j$ separately. However, in both structures, $p^{(0)}(X_j) = p^{(k)}(X_j), I^{(k)}=\{i\}$, which contradicts with the changes in marginal distribution.
     \end{itemize}
\end{itemize}
\end{proof}

\LEMINDUCEDCI*
\begin{proof}[Proof of \cref{lemma:conditional}] 
\phantom{}
\begin{itemize}[noitemsep,topsep=0pt,left=10pt]
    \item[1.] \textbf{(`$\Leftarrow$' direction)}
     For any $i,j \in [N]$, $i\in I$, the paths between $X_i$ and $X_j$ are inducing paths in the augmented DAG $Aug_{\mathcal{I}}(\mathcal{G})$. If $X_i$ is the descendant of $X_j$ or L, there must be an arrowhead pointing toward $X_i$. Then, with inducing paths containing one middle node as examples, the path may take the form $X_i \leftarrow\!\!\scalebox{0.6}{$\square$} \leftarrow\!\!\circ X_j$ or $X_i \leftrightarrow \scalebox{0.6}{$\square$}\rule[0.5ex]{0.25cm}{0.1mm}\!\circ X_j$ . When $X_i$ is intervened upon, the changes in p($X_i$) cannot propagate along the inducing path as the arrowhead pointing to $X_i$ blocks the propagation, resulting in the changes of the conditional distribution p($X_j\mid X_i$). Accordingly, in the augmented DAG $Aug_{\mathcal{I}}(\mathcal{G})$, we have $\psi_i \dep X_j \mid X_i$, and thus $p^{(0)}(X_j\mid X_i) \neq p^{(k)}(X_j\mid X_i), I^{(k)}=\{i\}$.

     \item[2.] \textbf{(`$\Rightarrow$' direction)} By contradiction, suppose $X_i \notin \{Des_{Aug_{\mathcal{I}}(\mathcal{G})}(X_j) \cup Des_{Aug_{\mathcal{I}}(\mathcal{G})}(L)\}$. To satisfy the requirements for inducing paths between $X_i$ and $X_j$, every non-endpoint node must be a collider with a tail oriented outward to $X_j$ or $S$. Consequently, the arrowhead only can arise from $X_i \rightarrow$, yielding an inducing path of the form $X_i\rightarrow \!\!\scalebox{0.6}{$\square$}\rule[0.5ex]{0.25cm}{0.1mm}\!\!\circ X_j$. In this case, however, $p^{(0)}(X_j\mid X_i) = p^{(k)}(X_j\mid X_i), I^{(k)}=\{i\}$, which conflicts with the requirement that conditional distribution changes.
\end{itemize}
\end{proof}

\THEOREMME*
\begin{proof}[Proof of \cref{theorem:EC}] 
\phantom{}
If two augmented DAGs $Aug_{I}(\mathcal{G}_1)$ and $Aug_{I}(\mathcal{G}_2)$ are $\mathcal{FI}$-Markov equivalent, they have the same CI patterns for the paths between any pair of nodes $X_i$ and $X_j$, where $i, j \in I$. In total, there are three types of node marks: tail, arrowhead, and both; these can be uniquely learned from data based on the \cref{lemma:dependent,lemma:marginal,lemma:conditional}.

\begin{itemize}[noitemsep,topsep=0pt,left=10pt]
\item \textbf{Arrowhead}.
If the arrowhead points into $X_i$, no matter from causation or latent confounders, this implies that $X_i$ cannot be the ancestor of $X_j$ or the ancestor of $S$ on the inducing paths. Therefore, following the construction rules of MAG \citep{zhang2008completeness}, it must be oriented as $X_i \leftarrow$ in MAG representation. 

\item \textbf{Tail.} If tails are oriented outward from $X_i$, as all the paths between $X_i$ and $X_j$ must be the inducing path, when the tail arises from $X_i \rightarrow$, the formation of an inducing node like $X_3$ in \cref{fig:maginal}(d) needs at least one tail and one arrowhead. When the tail originates from selection, then $X_i \in Ans_{Aug_{\mathcal{I}}(\mathcal{G})}(S)$, resulting in a tail pointing out $X_i$ in MAG representation. When the tail comes from causation, then $X_i \in Ans_{Aug_{\mathcal{I}}(\mathcal{G})}(X_j)$. In either case, the orientation is consistent with MAG construction rules for tails.

\item \textbf{Both arrowhead and tail}. When both an arrowhead points to $X_i$ and a tail points outward from $X_i$, only two types of combinations are possible: selection with arrowhead, or causation with latent confounders. In both cases, the MAG representation assigns a tail at $X_i$. This is because, according to the formation of the inducing node, causation and latent confounders provide only an arrowhead. Regardless of thether the tail from causation or selection, $X_i \in \{Ans_{Aug_{\mathcal{I}}(\mathcal{G})}(X_j) \cup Ans_{Aug_{\mathcal{I}}(\mathcal{G})}(S)\}$. Then, both are presented by tail in MAG representation.
\end{itemize}
The consistency in the MAG construction rules of augmented DAGs ensures that $M_1$ and $M_2$ have the same skeleton and v-structure in their MAG presentations, as well as the same marks for intervened nodes.\looseness=-1
\end{proof}

\begin{figure}
    \centering
    \includegraphics[width=0.8\linewidth]{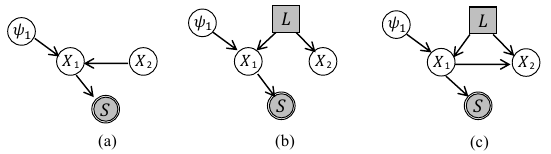}
    \caption{Cases of when selection introduce \cref{algo:just_algo}. (a) is not distorted by Y structure $\psi_i \rightarrow X_i \leftarrow X_j$, $X_i \rightarrow S$, due the causation between $X_1$ and $X_2$. However, a direct causation can not be identified in (b) and (c).}
    \label{fig:special_case}
\end{figure}
\SOUNDNESS*
The soundness of \cref{algo:just_algo} indicates that the output arrowhead, tail, square, and $\overset{\scalebox{1}{$\scriptstyle\blacktriangle$}}{\rightarrow},\circminus$ are consistent with the corresponding structures in the augmented DAG. 
\begin{proof}[Proof of \cref{thm:soundness}] 
\phantom{}

\begin{itemize}[noitemsep,topsep=0pt,left=10pt]

\item \textbf{Tail.} The $\mathcal{F}$-PAG is learned according to the CI results that align with d-separation in the augmented DAG $Aug_{I}(\mathcal{G})$ as analyzed in \cref{fig:MP} under the faithfulness assumption. With the theoretical guarantee proved in \cref{lemma:marginal} and \cref{lemma:conditional}, changes in marginal distribution $p^{(0)}(X_j) \neq p^{(k)}(X_j), I^{(k)}=\{i\}$ and invariance in conditional distribution $p^{(0)}(X_j\mid X_i) = p^{(k)}(X_j\mid X_i), I^{(k)}=\{i\}$ appear, if and only if the all inducing path between $X_i$ and $X_j$ start with tail point outward from $X_i$, resulting in $\psi_i \dep X_j$ and $\psi_i \indep X_j \mid X_i$. The CIs can uniquely identify the tail; others will exhibit different CIs.

\item \textbf{Arrowhead.} Similarly, the unique invariant marginal distribution $p^{(0)}(X_j) = p^{(k)}(X_j), I^{(k)}=\{i\}$ and variant conditional distribution $p^{(0)}(X_j\mid X_i) \neq p^{(k)}(X_j\mid X_i), I^{(k)}=\{i\}$ only from inducing paths start with the arrowhead in the augmented DAG.

\item \textbf{Square.} When both tails and arrowheads appear, the node is marked with a square, indicating variation in both marginal distribution $p^{(0)}(X_j) \neq p^{(k)}(X_j), I^{(k)}=\{i\}$ and conditional distribution $p^{(0)}(X_j\mid X_i) \neq p^{(k)}(X_j\mid X_i), I^{(k)}=\{i\}$. This allows the square mark to be uniquely identified. However, there is a special case that needs further discussion. In the augmented DAG $Aug_{I}(\mathcal{G})$ with $\psi_1 \rightarrow X_1$, if $X_1 \rightarrow S$ and $X_1 \leftarrow$, they form the Y-structure at $X_1$ such as \cref{fig:special_case}(a). Here, as the existence dependence induced by $X_2 \rightarrow X_1$, the extra dependence induced by selection in the Y-structure does not distort the final results, so the resulting edge remains $\scalebox{0.6}{$\square$}\rule[0.5ex]{0.25cm}{0.1mm}$. Similarly, the extra dependence induced by Y-structure does not affect the identifiability of $\scalebox{0.6}{$\square$}\rule[0.5ex]{0.25cm}{0.1mm}\scalebox{0.6}{$\square$}$ as well, as the existence of the inducing paths starting from both tail and arrowhead in between. However, when the arrowhead into $X_1 \leftarrow$ arises from latent confounders as shown in \cref{fig:special_case}(b,c), the causal relation becomes non-identifiable regardless of whether $X_1 \rightarrow X_2$ holds. In this case, selection contributes a tail and makes the path unblocked; the d-connection between $L$ and $\psi_i$ results in spurious causation, but no inducing path begins with a tail between $X_1$ and $X_2$. This is because of the effect of selection on $L$. Fortunately, we found that hard intervention can block the selection on latent confounders. The marginalized changes from different hard interventions help us identify if there exis the causal link in between. Therefore, the soundness is up to the square of $\scalebox{0.6}{$\square$}\rule[0.5ex]{0.25cm}{0.1mm}$ and $\scalebox{0.6}{$\square$}\rule[0.5ex]{0.25cm}{0.1mm}\scalebox{0.6}{$\square$}$.

\item $\overset{\scalebox{1}{$\scriptstyle\blacktriangle$}}{\rightarrow}.$ The identification of $\overset{\scalebox{1}{$\scriptstyle\blacktriangle$}}{\rightarrow}$ needs every inducing path to have at least one \textbf{Type~I} inducing node with corresponding interventional data available, such as $X_3$ of \cref{fig:node}(a). If all the nodes are \textbf{Type~II} inducing nodes as shown in (b), although interventional data is available, both the marginal distribution p($X_1$) and p($X_2$) change no matter $X_1 \rightarrow X_2$ or not, contradicting the identification $\overset{\scalebox{1}{$\scriptstyle\blacktriangle$}}{\rightarrow}$.

\item $\circminus$. As the identification rule condition of $\circminus$ is the same as $\overset{\scalebox{1}{$\scriptstyle\blacktriangle$}}{\rightarrow}.$ If there is a \textbf{Type~I} inducing node like $X_3$ of \cref{fig:node}(c), the existence of direct selection can be identified by intervening on $X_3$. If there is only a \textbf{Type~II} inducing node like $X_3$ in (d), the change of p($X_2 \mid do(X_3)$) can be due to the selection between $X_2$ and $X_3$, and cannot identify the existence of direct selection of $X_1 \rightarrow S \leftarrow X_2$.
\end{itemize}

\end{proof}

\begin{figure}
    \centering
    \includegraphics[width=0.9\linewidth]{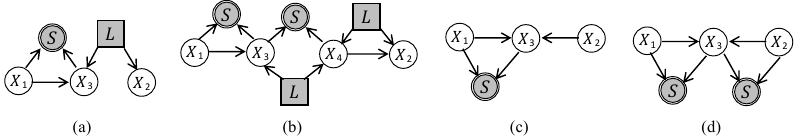}
    \caption{Inducing path between $X_1$ and $X_2$ with \textbf{Type~I} inducing node (a,c) or not (b,d).}
    \label{fig:node}
\end{figure}

\COMPLETENESS*
\begin{proof}[Proof of \cref{thm:completeness}] 
\phantom{}
To establish the completeness, we need to show that the $\mathcal{F}$-PAG returned by $\mathcal{F}$-FCI is maximally informative when interventional data is available. Based on the proof of \cref{lemma:marginal,lemma:conditional,lemma:dependent}, the identifiability of learning the fundamental node marks, i.e., arrowhead, tail, and square, from data is guaranteed. Therefore, among the intervened variables, all the node marks can be identified. Moreover, with the identification guarantee of node markers, the identification of $\overset{\scalebox{1}{$\scriptstyle\blacktriangle$}}{\rightarrow}$ and $\circminus$ can be guaranteed as well, this is because the condition of \textbf{Type~I} is detectable based on the identification of node marks. For the node marks without interventional data, the completeness was established with arrowheads in \citep{ali2012towards} and tails in \citep{zhang2008completeness}. For the node marks for node pairs where one of them is intervened, a.k.a. do-see rules, the completeness was established in \citep{kocaoglu2019characterization}.
\end{proof}

\section{Related Work}
 \label{related_work}
In this section, we provide a comprehensive review of the relevant literature.

\paragraph{When only pure observational data is available.} There are constraint-based causal discovery algorithms \citep{spirtes2001causation}, score-based algorithms \citep{chickering2002optimal}, and methods that utilize properties of functional forms in the underlying causal process \citep{shimizu2006linear,hoyer2008nonlinear,zhang2012identifiability}. The corresponding Markov equivalence characterization can be referred to \citep{verma1991equivalence,meek1995causal,andersson1997characterization,robins2000marginal,friedman2000using,brown2005comparison}.

\paragraph{Two kinds of interventions.} When interventional data is available, two types of interventions have been proposed to model data generation: \emph{hard} (or \emph{perfect}) interventions and \emph{soft} (or \emph{imperfect}) interventions, also referred to as \emph{mechanism changes}. Hard interventions disrupt the dependence between a target variable and its direct causes, either by \emph{deterministically} fixing its value or by \emph{stochastically} assigning values drawn from an independent distribution \citep{pearl2009models, korb2004varieties}. In contrast, soft interventions preserve the dependence but alter the functional form governing the target variable’s causal mechanism \citep{tian2001causal, eberhardt2007interventions}.

\paragraph{Early Attempts in Interventional Causal Discovery.}  
The earliest Bayesian approaches \citep{cooper1999causal, eaton2007exact} estimated the posterior distribution of DAGs using both observational and interventional data. However, these methods did not address key challenges such as identifiability or equivalence class characterization. \citet{tian2001causal} was the first to explore identifiability and Markov equivalence in interventional causal discovery. They focused on single-variable interventions with mechanism changes (soft interventions) and provided a graphical criterion for determining when two DAGs are indistinguishable, though no formal representation of the resulting equivalence class was introduced.

\paragraph{Advancements in Hard Interventions.}  
\citet{hauser2012characterization} first characterized Markov equivalence classes (MECs) under hard stochastic, multiple-variable interventions. Their graphical criterion, based on mutilated DAGs (as introduced in Section 2), provides an equivalence class representation using $\mathcal{I}$-essential graphs. This criterion aligns with \citet{tian2001causal}, though the latter focuses solely on single-variable interventions. The proposed GIES algorithm \citep{hauser2015jointly} integrates conditional independence (CI) relations from different experimental settings. Following this paradigm, related methods have been developed \citep{tillman2011learning, claassen2010causal}. However, \citet{wang2017permutation} identified consistency issues in GIES when faithfulness assumptions are violated and introduced permutation-based algorithms as an alternative solution.

\paragraph{When latent confounders are involved.} In the pure observational data and nonparametric causal discovery setting, the frameworks of MAG and FCI have been well established \citep{richardson2002ancestral,zhang2008completeness}. For interventional causal discovery, various methods have been proposed to address latent variables based on measuring overlapping variables across different interventions~\citep{hyttinen2013discovering, triantafillou2015constraint,cao2024causal} and invariance~\citep{kocaoglu2017cost,eaton2007exact,magliacane2016ancestral}. They are either lying under the umbrellas of FCI and the augmented DAG frameworks or using parametric assumptions.

\paragraph{When selection  is involved}
In purely observational and nonparametric causal discovery, selection  is typically constrained by structural limitations \citep{zhang2016identifiability}. However, in the context of interventional causal discovery, various methods have been developed to explicitly address selection bias. These methods leverage interventional data to disentangle the effects of selection mechanisms from genuine causal relations \cite{li2023causal, mooij2020joint}. Similarly, these methods are still limited to equivalence classes under the umbrella of FCI. Moreover, \citet{dai2025selection} discussed how selection interacts with intervention, and built a twin interventional graph to model the selection that happens before intervention.

\textbf{When both latent confounders and selection are involved.}
The presence of latent confounders and selection bias introduces spurious dependencies among observed variables, undermining key assumptions in causal discovery and leading to a loss of identifiability. In the pure observational data, the Fast Causal Inference (FCI) algorithm \citep{spirtes2001causation, zhang2008completeness} was an early attempt to infer ancestral relationships, but it is constrained to identifying an equivalence class limited by the structural information of v-structures. Moreover, ambiguities remain in handling selection effects. Other approaches have similarly characterized ancestral equivalence classes based on graphical properties, but without fully addressing these challenges \cite{rohekar2021iterative}.

\section{Supplementary Experimental Details and Results}\label{app:experiments}
\subsection{simulation}
We use the implementation of IGSP, UT-IGSP, and JCI-GSP from the \texttt{causaldag} package~\citep{squires2018causaldag}, and the implementation of CDIS from \url{https://github.com/MarkDana/CDIS}. We use the Kernel-based CI test \citep{zhang2012kernel} to examine nonlinear conditional relations for both baselines and $\mathcal{F}$-FCI. The significance level is set to 0.05.

\begin{figure}
    \centering
    \includegraphics[width=0.8\linewidth]{legend-dag-blocks.pdf}
    \vspace{-0.2cm}
    \includegraphics[width=0.495\linewidth]{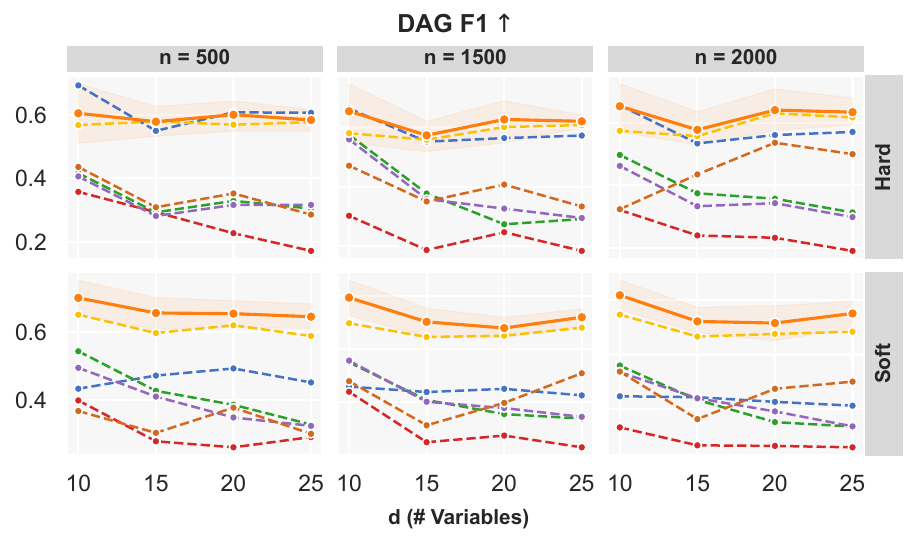}
    \includegraphics[width=0.495\linewidth]{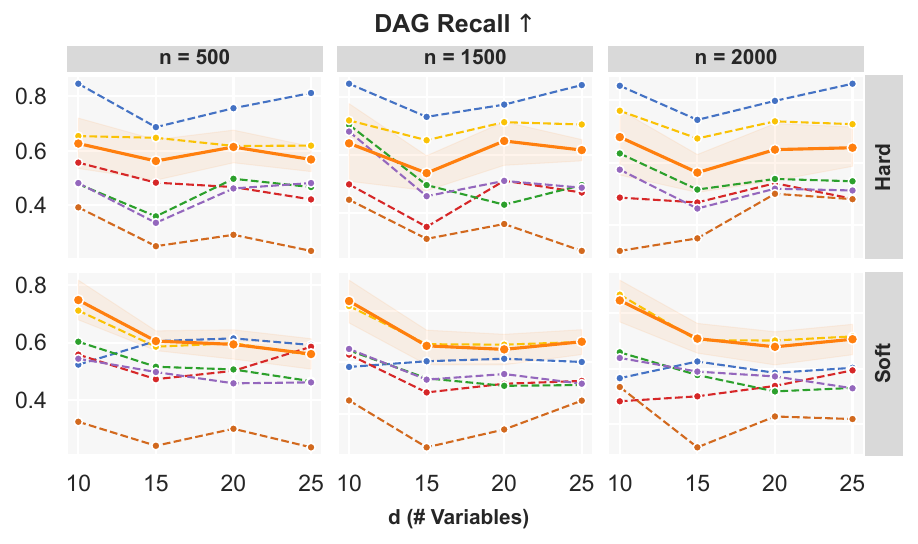}
    \caption{Comparison results in identifying causal relations under DAG F1 score and DAG Recall metrics. All values are averaged over 10 graphs. Error bars represent the 95\% confidence interval.}
    \label{fig:ex_result_other}
\end{figure}
To comprehensively evaluate the effectiveness of $\mathcal{F}$-FCI in identifying causal relations compared with baselines, besides the experimental result on simulation shown in \Cref{fig:ex_result}, experimental results on F1-score and recall are shown in \cref{fig:ex_result_other}. All experimental results show that $\mathcal{F}$-FCI can identify the spurious dependence induced by post-treatment selection or by inducing paths illustrated in the motivation examples \cref{fig:motive}, while baselines cannot.

\begin{figure}
    \centering
    \includegraphics[width=0.9\linewidth]{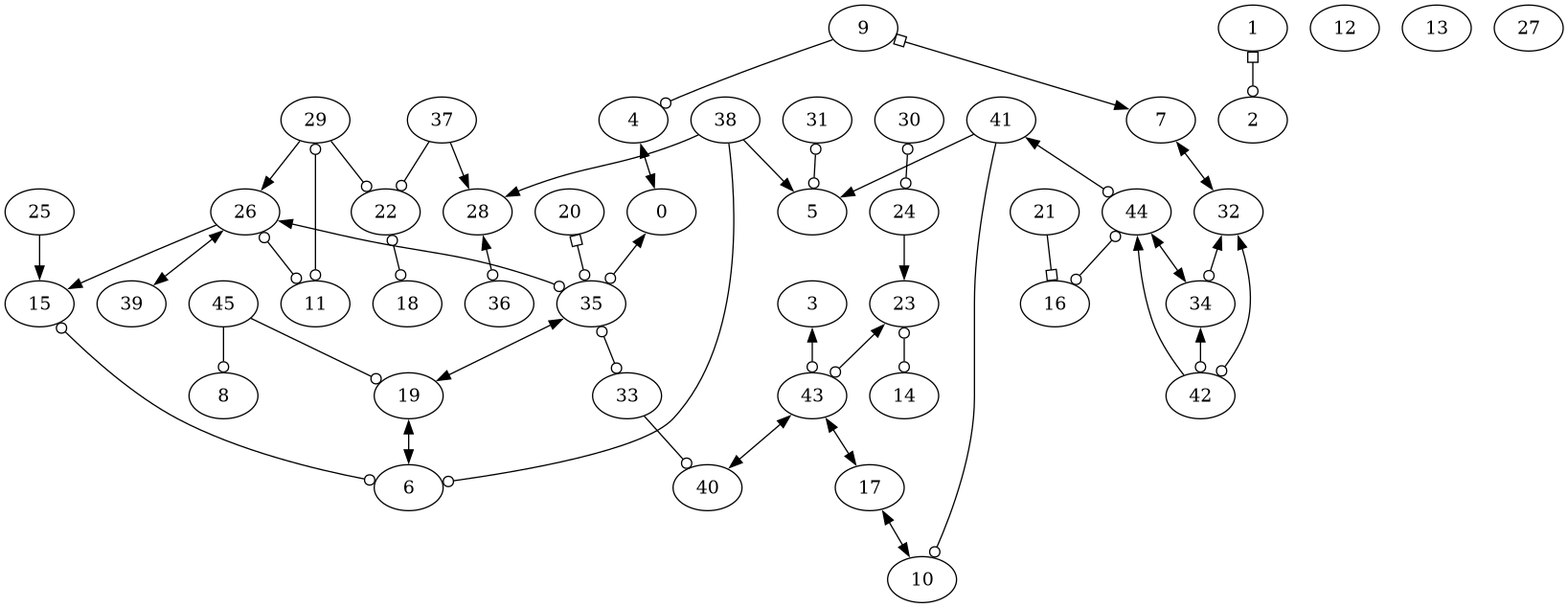}
    \caption{The output $\mathcal{F}$-PAG generated by $\mathcal{F}$-FCI on 50 variables.}
    \label{fig:fpag}
\end{figure}
The simulation results on graphs with 10–25 nodes in the non-parametric setting evaluate the scalability of $\mathcal{F}$-FCI to moderately large graphs. We further conduct experiments with 50 variables, of which 30 have interventional data, while 4–7 selection variables and latent confounders are randomly generated. The resulting $\mathcal{F}$-PAG is shown in \cref{fig:fpag}. In addition, experiments on over 5000 genes further demonstrate the scalability of $\mathcal{F}$-FCI to high-dimensional biological data. Theoretically, the complexity of constraint-based causal discovery methods, including $\mathcal{F}$-FCI, scales with the graph size. However, because the CI tests are independent, the computational burden can be substantially reduced through parallelization. In particular, Fast FCI \citep{ramsey2017million} can handle millions of variables, and we leverage this implementation for large-scale gene expression data. Based on the previous analysis, these results support the scalability of $\mathcal{F}$-FCI and its practicality for real-world applications.

\begin{table}[ht]
    \centering
    \vspace{-0.2cm}
    \caption{Accuracy \% of $\mathcal{F}$-FCI in identifying post-treatment selection on synthetic data. We report the mean and variance values of accuracy across 10 independent graphs for each configuration.}
    \label{table1}
    \begin{adjustbox}{width={\textwidth},totalheight={\textheight},keepaspectratio}%
    \renewcommand{\arraystretch}{1.2}
    \begin{tabular}{cr@{\hspace{1pt}}lr@{\hspace{1pt}}lr@{\hspace{1pt}}lr@{\hspace{1pt}}l | r@{\hspace{1pt}}lr@{\hspace{1pt}}lr@{\hspace{1pt}}lr@{\hspace{1pt}}l}
    \toprule
    \diagbox[width=22pt,height=22pt, innerleftsep=0pt, innerrightsep=-4pt]{$n$}{$|\mathcal{X}|$}  & \multicolumn{2}{c}{\textbf{10}}  & \multicolumn{2}{c}{\textbf{15}}  & \multicolumn{2}{c}{\textbf{20}}  & \multicolumn{2}{c}{\textbf{25}}  & \multicolumn{2}{c}{\textbf{10}}  & \multicolumn{2}{c}{\textbf{15}}  & \multicolumn{2}{c}{\textbf{20}}  & \multicolumn{2}{c}{\textbf{25}} \\
    \midrule 
    &\multicolumn{8}{c}{\textbf{Hard intervention}}&\multicolumn{8}{c}{\textbf{Soft intervention}}\\
    \midrule 
    \textbf{500}  &  66.7&$\pm${\color{gray}{15.0}}  &  64.8&$\pm${\color{gray}{9.1}}  &  75.9$\pm$&{\color{gray}{10.5}}  &  71.7&$\pm${\color{gray}{15.6}} &  60.4$\pm$&{\color{gray}{24.0}} 
 &  57.3&$\pm${\color{gray}{17.3}}  &  66.7$\pm$&{\color{gray}{15.0}}  &  57.2$\pm$&{\color{gray}{17.0}}\\
    \textbf{1,000}  &  86.5&$\pm${\color{gray}{4.4}}  &  72.5&$\pm${\color{gray}{9.1}}  &  73.1$\pm$&{\color{gray}{2.9}}  &  70.2&$\pm${\color{gray}{10.1}} &  63.0$\pm$&{\color{gray}{22.7}} 
 &  56.7&$\pm${\color{gray}{17.3}}  &  74.7$\pm$&{\color{gray}{8.8}}  &  52.1$\pm$&{\color{gray}{17.5}}\\
    \textbf{1,500}  &  93.8&$\pm${\color{gray}{0.9}}  &  80.9&$\pm${\color{gray}{9.5}}  &  69.0$\pm$&{\color{gray}{7.3}}  &  70.6&$\pm${\color{gray}{9.6}} &  70.0$\pm$&{\color{gray}{21.0}} 
 &  60.8&$\pm${\color{gray}{16.3}}  &  75.0$\pm$&{\color{gray}{9.6}}  &  63.9$\pm$&{\color{gray}{15.2}}\\
    \textbf{2,000}  &  91.6&$\pm${\color{gray}{2.3}}  &  85.0&$\pm${\color{gray}{1.8}}  &  75.6$\pm$&{\color{gray}{6.8}}  &  71.3&$\pm${\color{gray}{5.1}} &  80.0$\pm$&{\color{gray}{16.0}} 
 &  63.9&$\pm${\color{gray}{13.1}}  &  76.3$\pm$&{\color{gray}{10.1}}  &  76.3$\pm$&{\color{gray}{8.9}}\\
    \bottomrule
    \end{tabular}
    \end{adjustbox}
\end{table}
Moreover, to verify the effectiveness of $\mathcal{F}$-FCI in handling the post-treatment selection, we report in \cref{table1} the accuracy of identifying post-treatment selection on simulated data across different configurations. Experimental results demonstrate that under each setting with 2 or 3 selection variables, the accuracy of $\mathcal{F}$-FCI increases with sample size. In most configurations with more than 1000 samples, the accuracy exceeds 70\%.

The performance of constraint-based causal discovery methods relies on the accuracy of the conditional independence test. To verify the robustness of $\mathcal{F}$-FCI, we conduct experiments across different noise levels of $X$, where the parameters of the uniform distribution are randomly sampled from the following settings, i.e., $X \sim U([0,1]\cup [1,2])$, $X \sim U([0,2]\cup [2,4])$, and $X \sim U([0,3]\cup [3,6])$. Experimental results on DAG Precision, DAG $F_1$ score, DAG Recall, and DAG SHD are reported in \Cref{robustness}. Experimental results show that $\mathcal{F}$-FCI is stable in precision across different noise levels. The low noise level (green line) results in a relatively higher recall. This is because when the noise level is low, as in $X \sim U([0,1]\cup [1,2])$, the variance is smaller than in other settings. Consequently, dependence patterns are more easily detected, leading to the recovery of more edges and thereby yielding higher recall.

\begin{figure}
    \centering
    \includegraphics[width=0.8\linewidth]{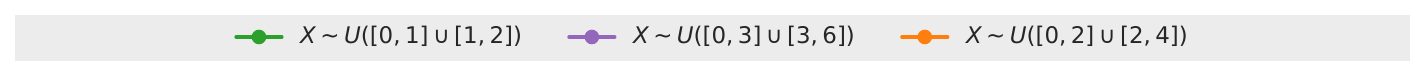}
    \vspace{-0.3cm}
    \includegraphics[width=0.495\linewidth]{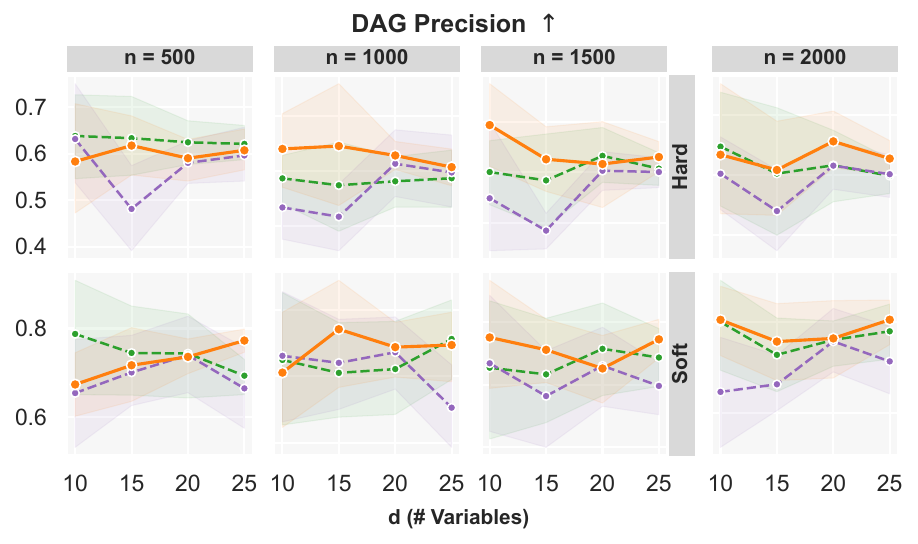}
    \includegraphics[width=0.495\linewidth]{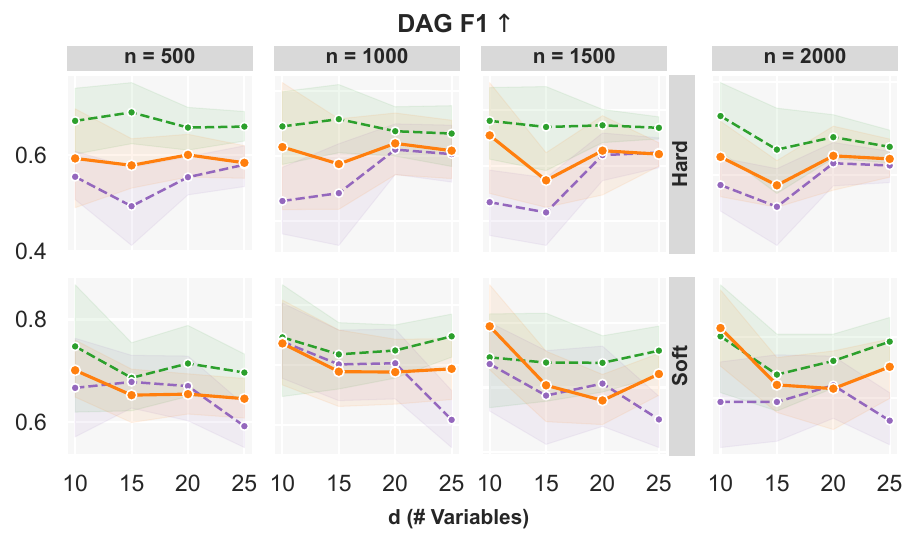}
    \includegraphics[width=0.495\linewidth]{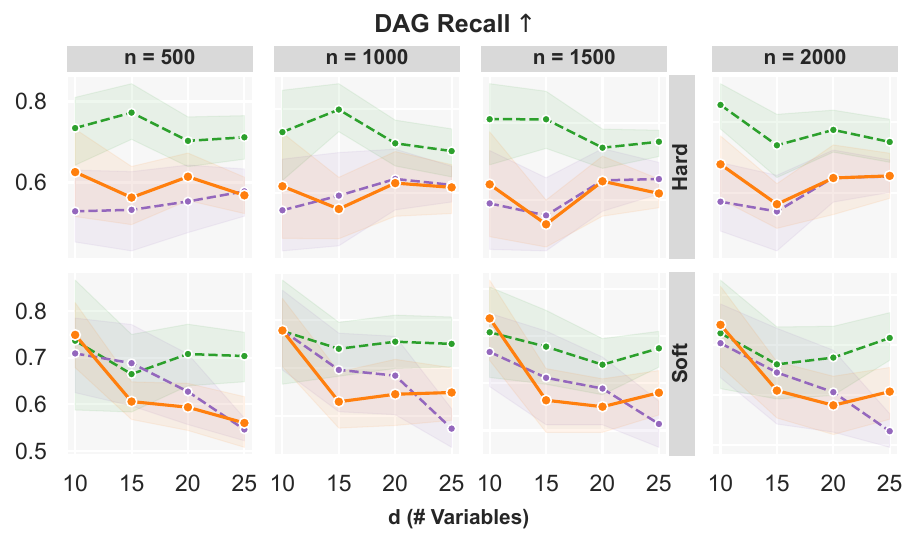}
    \includegraphics[width=0.495\linewidth]{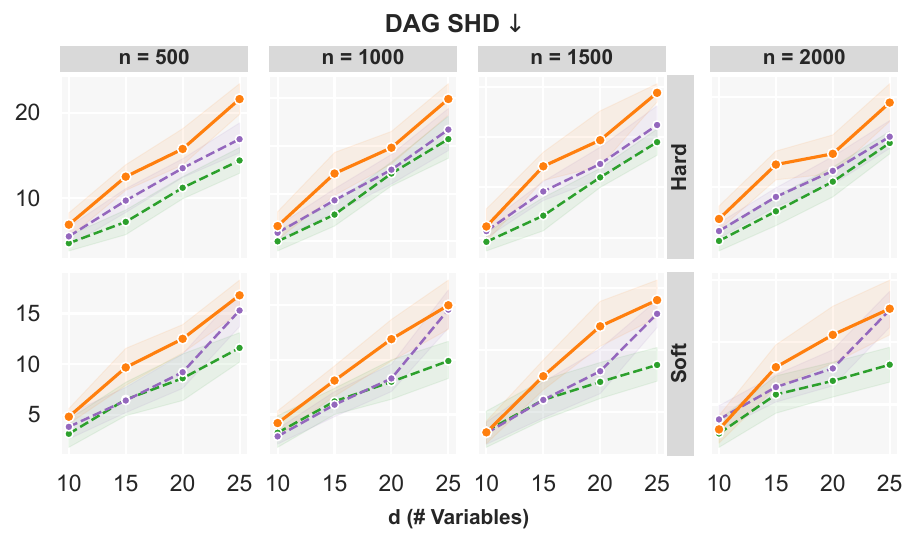}
    \caption{Robustness analysis of $\mathcal{F}$-FCI on $X$ with different noise levels under four metrics: DAG Precision, DAG $F_1$, DAG Recall, and DAG SHD (Structural Hamming Distance). All values are averaged over 10 runs with different random seeds. Error bars represent the 95\% confidence interval.}
    \label{robustness}
\end{figure}

\subsection{Expanded Evaluation and Robustness Analyses of $\mathcal{F}$-FCI}
To further assess the ability of $\mathcal{F}$-FCI to handle post-treatment selection, we compared $\mathcal{F}$-FCI with FCI-INTERVEN that can only handle latent confounders but not post-treatment selection while increasing the number of selection variables. Experimental results in \cref{tab:ffci-fciinterven} show that across both hard- and soft-intervention regimes, $\mathcal{F}$-FCI achieved precision gains exceeding $15\%$, and this advantage grew as the extent of post-treatment selection increased, confirming that explicitly modeling selection yields tangible benefits for structure recovery.

\begin{table}[ht]
\centering
\caption{Comparison of $\mathcal{F}$-FCI and FCI-INTERVEN on graphs with 20 variables, 2--3 latent confounders, and 5--6 randomly chosen selection variables. All values are averaged over 10 runs.}
\label{tab:ffci-fciinterven}
\resizebox{1\linewidth}{!}{
\begin{tabular}{rcccccccc}
\toprule
& \multicolumn{4}{c}{$\mathcal{F}$-FCI} & \multicolumn{4}{c}{FCI-INTERVEN} \\
\cmidrule(lr){2-5}\cmidrule(lr){6-9}
$n$ & Precision & SHD & F1 & Recall & Precision & SHD & F1 & Recall \\
\midrule
\multicolumn{9}{c}{Hard intervention}\\
\midrule
500  & $60.1 \pm 0.6$ & $12.6 \pm 1.8$ & $56.7 \pm 0.2$ & $55.7 \pm 1.0$
     & $46.7 \pm 0.1$ & $16.3 \pm 3.2$ & $54.8 \pm 0.1$ & $69.0 \pm 2.1$ \\
1500 & $62.1 \pm 0.6$ & $12.4 \pm 4.2$ & $58.2 \pm 0.7$ & $57.1 \pm 2.1$
     & $46.8 \pm 0.1$ & $16.5 \pm 3.6$ & $56.8 \pm 0.2$ & $74.5 \pm 2.2$ \\
2000 & $64.2 \pm 0.4$ & $10.9 \pm 4.7$ & $64.7 \pm 0.5$ & $66.4 \pm 1.3$
     & $43.7 \pm 0.1$ & $17.2 \pm 6.7$ & $55.2 \pm 0.1$ & $77.8 \pm 2.1$ \\
\midrule
\multicolumn{9}{c}{Soft intervention}\\
\midrule
500  & $67.6 \pm 0.6$ & $9.8 \pm 3.9$ & $63.0 \pm 0.8$ & $61.6 \pm 1.0$ & $49.0 \pm 0.2$ & $14.2 \pm 1.6$ & $54.1 \pm 0.3$ & $62.2 \pm 1.3$ \\
1500 & $68.7 \pm 0.4$ & $8.7 \pm 3.2$ & $69.1 \pm 0.4$ & $70.2 \pm 0.4$ & $49.8 \pm 0.6$ & $14.1 \pm 5.49$ & $58.3 \pm 0.3$ & $72.2 \pm 1.1$ \\
2000 & $70.3 \pm 1.5$ & $8.6 \pm 7.2$ & $68.8 \pm 0.9$ & $68.9 \pm 0.7$ & $48.1 \pm 0.4$ & $14.4 \pm 5.0$ & $57.1 \pm 0.2$ & $72.1 \pm 0.6$ \\
\bottomrule
\end{tabular}}
\end{table}

To further assess the robustness of $\mathcal{F}$-FCI in more challenging and realistic settings, we conducted two stress tests: (i) richer nonlinear mechanisms and (ii) alternative noise families. 
For the nonlinearity test, in each run the structural function $f$ on every edge was sampled uniformly from
\[
\mathcal{F}=\Big\{\sin(\pi x)+0.2\sin(2\pi x),\; x^{2},\; \tanh(x),\; x,\; x\,e^{-x^{2}/2},\; \log\!\big(1+e^{2x-1}\big)+0.05\,x^{2}\Big\}.
\]
Experimental results in \cref{tab:ffci_complex} show a modest decline in \emph{precision} as functional complexity increases, while overall performance remains competitive.
For the noise test, we replaced \emph{Uniform} noise with \emph{Laplace} and \emph{Gumbel} noises; for each variable in $X$, the location and scale were drawn independently as $\mu\sim\mathcal{U}(0,3)$ and $s\sim\mathcal{U}(2,4)$ (Laplace: $s=b$; Gumbel: $s=\beta$). \cref{tab:uniform_laplace} reports the experimental results of $\mathcal{F}$-FCI with \emph{Laplace} noise; \cref{tab:uniform_gumbel} reports the results with \emph{Gumbel} noise. Across these noise families, $\mathcal{F}$-FCI’s performance remains \emph{stable} and in some settings \emph{slightly improves}, indicating robustness to heavy-tailed and asymmetric error distributions.

\begin{table}[h]
\centering
\caption{Experimental results of $\mathcal{F}$-FCI under more complex nonlinearities on the graph with 15 variables. All values are averaged over 10 runs.}
\label{tab:ffci_complex}
\resizebox{\linewidth}{!}{
\begin{tabular}{rcccccccc}
\toprule
& \multicolumn{4}{c}{$\mathcal{F}$-FCI} & \multicolumn{4}{c}{$\mathcal{F}$-FCI (complex non-linearities)} \\
\cmidrule(lr){2-5}\cmidrule(lr){6-9}
$n$ & Precision & SHD & F1 & Recall & Precision & SHD & F1 & Recall \\
\midrule
\multicolumn{9}{c}{Hard intervention} \\
\midrule
500  & $61.7 \pm 1.0$ & $12.5 \pm 5.9$ & $57.9 \pm 0.7$ & $56.3 \pm 1.4$
     & $57.9 \pm 2.2$ & $8.6 \pm 4.6$ & $55.8 \pm 0.4$ & $55.5 \pm 0.3$ \\
1500 & $62.6 \pm 1.1$ & $12.1 \pm 5.5$ & $57.4 \pm 0.7$ & $53.8 \pm 0.9$
     & $56.3 \pm 0.9$ & $8.4 \pm 4.4$ & $58.2 \pm 0.9$ & $60.5 \pm 1.0$ \\
2000 & $60.9 \pm 1.6$ & $12.5 \pm 6.6$ & $57.8 \pm 0.8$ & $56.8 \pm 1.0$
     & $57.6 \pm 1.1$ & $8.3 \pm 7.4$ & $61.6 \pm 1.2$ & $67.6 \pm 1.9$ \\
\midrule
\multicolumn{9}{c}{Soft intervention} \\
\midrule
500  & $72.7 \pm 1.6$ & $9.5 \pm 8.3$ & $65.7 \pm 0.6$ & $60.5 \pm 0.3$
     & $64.0 \pm 2.6$ & $7.4 \pm 8.2$ & $62.9 \pm 1.6$ & $63.7 \pm 2.1$ \\
1500 & $75.6 \pm 0.7$ & $7.9 \pm 4.3$ & $70.3 \pm 0.8$ & $66.4 \pm 1.2$
     & $66.8 \pm 2.6$ & $6.1 \pm 9.9$ & $69.5 \pm 1.6$ & $73.3 \pm 1.2$ \\
2000 & $75.7 \pm 1.8$ & $8.0 \pm 6.8$ & $72.2 \pm 0.6$ & $70.9 \pm 0.8$
     & $71.4 \pm 1.2$ & $5.3 \pm 7.0$ & $73.4 \pm 1.2$ & $75.9 \pm 1.6$ \\
\bottomrule
\end{tabular}}
\end{table}

\begin{table}[h]
\centering
\caption{Comparison of $\mathcal{F}$-FCI under Uniform noise vs.\ Laplace noise on graph with 15 variables.All values are averaged over 10 runs.}
\label{tab:uniform_laplace}
\resizebox{\linewidth}{!}{
\begin{tabular}{rcccccccc}
\toprule
& \multicolumn{4}{c}{Uniform} & \multicolumn{4}{c}{Laplace} \\
\cmidrule(lr){2-5}\cmidrule(lr){6-9}
$n$ & Precision & SHD & F1 & Recall & DAG Precision & DAG SHD & DAG F1 & DAG Recall \\
\midrule
\multicolumn{9}{c}{Hard intervention (15 variables)} \\
\midrule
500  & $61.7 \pm 1.0$ & $12.5 \pm 5.9$ & $57.9 \pm 0.7$ & $56.3 \pm 1.4$
     & $59.9 \pm 1.5$ & $8.6 \pm 5.6$ & $58.6 \pm 1.1$ & $58.7 \pm 1.6$ \\
1500 & $62.6 \pm 1.1$ & $12.1 \pm 5.5$ & $57.4 \pm 0.7$ & $53.8 \pm 0.9$
     & $64.1 \pm 2.4$ & $7.7 \pm 6.6$ & $62.3 \pm 1.5$ & $63.0 \pm 2.2$ \\
2000 & $60.9 \pm 1.6$ & $12.5 \pm 6.6$ & $57.8 \pm 0.8$ & $56.8 \pm 1.0$
     & $61.7 \pm 1.9$ & $7.7 \pm 7.6$ & $60.6 \pm 1.7$ & $60.4 \pm 2.2$ \\
\midrule
\multicolumn{9}{c}{Soft intervention (15 variables)} \\
\midrule
500  & $72.7 \pm 1.6$ & $9.5 \pm 8.3$ & $65.7 \pm 0.6$ & $60.5 \pm 0.3$
     & $69.8 \pm 2.2$ & $9.1 \pm 4.1$ & $59.2 \pm 1.2$ & $57.3 \pm 1.6$ \\
1500 & $75.6 \pm 0.7$ & $7.9 \pm 4.3$ & $70.3 \pm 0.8$ & $66.4 \pm 1.2$
     & $72.7 \pm 1.6$ & $7.4 \pm 4.8$ & $65.3 \pm 0.6$ & $63.7 \pm 0.7$ \\
2000 & $75.7 \pm 1.8$ & $8.0 \pm 6.8$ & $72.2 \pm 0.6$ & $70.9 \pm 0.8$
     & $73.4 \pm 1.8$ & $6.7 \pm 4.0$ & $67.5 \pm 0.9$ & $63.5 \pm 0.9$ \\
\bottomrule
\end{tabular}}
\end{table}

\begin{table}[!h]
\centering
\caption{Comparison of $\mathcal{F}$-FCI under Uniform noise vs.\ Gumbel noise on graph with 15 variables.All values are averaged over 10 runs.}
\label{tab:uniform_gumbel}
\resizebox{\linewidth}{!}{
\begin{tabular}{rcccccccc}
\toprule
& \multicolumn{4}{c}{Uniform} & \multicolumn{4}{c}{Gumbel} \\
\cmidrule(lr){2-5}\cmidrule(lr){6-9}
$n$ & Precision & SHD & F1 & Recall & DAG Precision & DAG SHD & DAG F1 & DAG Recall \\
\midrule
\multicolumn{9}{c}{Hard intervention (15 variables)} \\
\midrule
500  & $61.7 \pm 1.0$ & $12.5 \pm 5.9$ & $57.9 \pm 0.7$ & $56.3 \pm 1.4$
     & $60.9 \pm 0.8$ & $9.6 \pm 4.0$ & $55.5 \pm 0.9$ & $52.6 \pm 1.6$ \\
1500 & $62.6 \pm 1.1$ & $12.1 \pm 5.5$ & $57.4 \pm 0.7$ & $53.8 \pm 0.9$
     & $57.6 \pm 0.8$ & $9.5 \pm 5.3$ & $57.0 \pm 1.1$ & $57.3 \pm 1.8$ \\
2000 & $60.9 \pm 1.6$ & $12.5 \pm 6.6$ & $57.8 \pm 0.8$ & $56.8 \pm 1.0$
     & $61.9 \pm 1.8$ & $9.1 \pm 9.9$ & $62.7 \pm 1.5$ & $65.5 \pm 2.2$ \\
\midrule
\multicolumn{9}{c}{Soft intervention (15 variables)} \\
\midrule
500  & $72.7 \pm 1.6$ & $9.5 \pm 8.3$ & $65.7 \pm 0.6$ & $60.5 \pm 0.3$
     & $69.4 \pm 3.1$ & $7.4 \pm 3.4$ & $58.6 \pm 1.2$ & $52.6 \pm 1.4$ \\
1500 & $75.6 \pm 0.7$ & $7.9 \pm 4.3$ & $70.3 \pm 0.8$ & $66.4 \pm 1.2$
     & $73.8 \pm 1.7$ & $6.7 \pm 4.2$ & $62.8 \pm 1.8$ & $61.6 \pm 3.0$ \\
2000 & $75.7 \pm 1.8$ & $8.0 \pm 6.8$ & $72.2 \pm 0.6$ & $70.9 \pm 0.8$
     & $76.9 \pm 2.6$ & $6.0 \pm 4.0$ & $66.4 \pm 1.6$ & $62.0 \pm 3.1$ \\
\bottomrule
\end{tabular}}
\end{table}

\subsection{real-world}\label{app:real-world}
The Norman dataset comprises gene expression profiles of 91,205 lung epithelial cells measured across 5,045 genes. Among these, 7,353 samples correspond to purely observational data (control group), while the remaining samples are derived from gene perturbation experiments. In total, 105 genes were perturbed individually. Following the procedure of Algorithm \ref{algo:just_algo}, a parallel algorithm Fast-FCI \citep{ramsey2017million} with BIC score is first utilized to get the skeleton over 5045 genes, due to the large number of genes. Then, the rules of CI patterns are utilized to orient the skeleton among perturbed genes.

To evaluate the effectiveness of $\mathcal{F}$-FCI in identifying causal relations, latent confounders, and post-treatment selection, we conducted experiments on gene perturbation (interventional) data, with the results summarized in \cref{fig:norman}. Several identified causal links are consistent with established biological evidence, including $IGDCC3 \rightarrow FEV$ is aligned with biological evidence provided by ARCHS4 \citep{lachmann2018massive} and Enrichr \citep{chen2013enrichr}, $CEBPA \rightarrow FEV$ is supported by Enrichr, $SET \rightarrow FOXL2$ and $SET \rightarrow KMT2A$ are supported by GEO~\citep{clough2016gene}, $SET \rightarrow PRDM1$ is supported by ChEA2022~\citep{keenan2019chea3} and ENCODE-TF-CHIP-seq \citep{encode2012integrated}, $HOXC13 \rightarrow SET$ is supported by ChEA2022, $ZC3HAV1 \rightarrow TSC22D1$ is supported by Enrichr, $LYL1 \rightarrow KLF1$ is supported by ChEA2022 and ARCHS4, $CELF2 \rightarrow FOXL2$ is supported by ARCHS4, GEO, and Enrichr. Moreover, $\mathcal{F}$‑FCI flags several genes, such as ZNF318, CDKN1C, CDKN1A, and RREB1, as being affected by post‑treatment selection. Here, post‑treatment selection refers to the scRNA‑seq quality‑control (QC) filtering step that retains only transcriptionally active, viable cells, typically those with high nFeature and high nCount, together with a low fraction of mitochondrial transcripts (percent.mt). To contextualize this selection effect, prior work links CDKN1A (p21) and CDKN1C (p57) to DNA‑damage responses and cell‑cycle inhibition. CDKN1A is a canonical DNA‑damage–responsive CDK inhibitor that enforces cell‑cycle arrest and is implicated in growth suppression, transcriptional reprogramming, and apoptosis under genotoxic stress \citep{el1993waf1, harper1993p21}. CDKN1C is another CDK inhibitor (acting primarily at G1) that negatively regulates proliferation and helps maintain quiescent/non‑proliferative states \citep{matsuoka1995p57kip2, creff2020functional}. Cells exhibiting elevated CDKN1A/CDKN1C activity are therefore plausibly less transcriptionally active, consistent with lower nFeature/nCount and a higher likelihood of being excluded by QC. In addition, RREB1 and ZNF318 have been associated with mitochondrial‑related programs. RREB1 is a Ras‑responsive transcription factor that has been implicated in the regulation of nuclear‑encoded mitochondrial respiratory‑chain components \citep{han2024selective}, while ZNF318 (also reported as ZFP318 in some contexts) is linked to transcription/splicing regulation and has been connected to mitochondrial gene‑expression programs in specific immune states (e.g., memory B‑cell–related programs) \citep{wang2024high}. Together, these associations are consistent with QC‑based post‑treatment selection that is sensitive to mitochondrial transcript burden and overall transcriptional activity.

\begin{figure}[t]
    \centering
    \includegraphics[angle=90, width=0.4\linewidth]{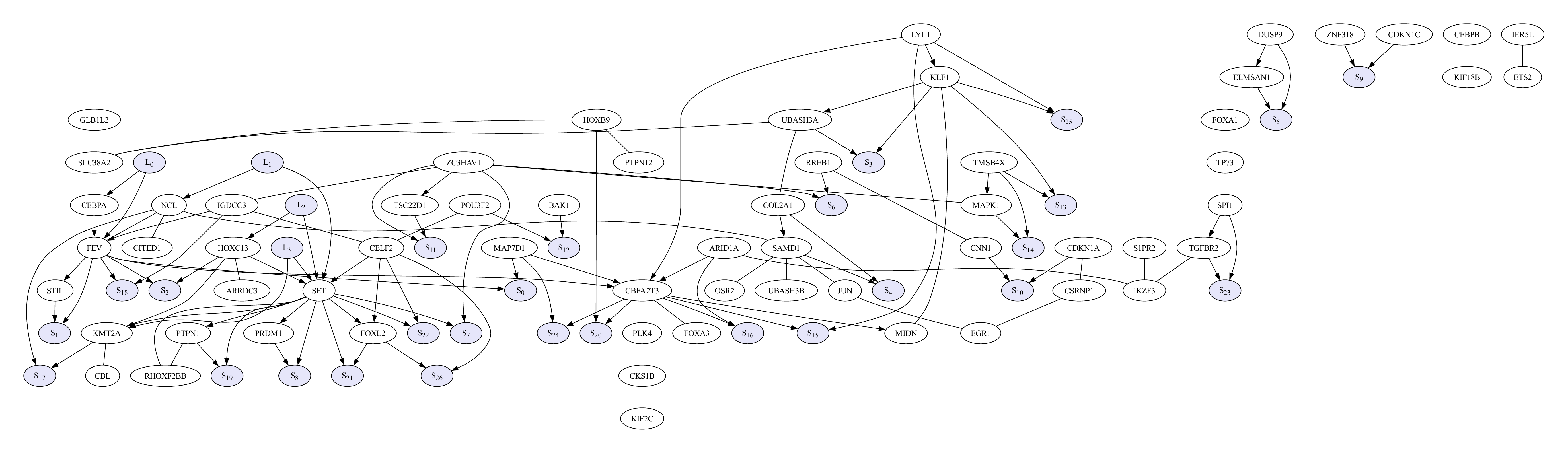}
    \caption{The causal structure among genes with interventional data identified by $\mathcal{F}$-FCI. }
    \label{fig:norman}
\end{figure}

\section{Discussion}
\textbf{Pre-treatment selection vs. biological constraints vs. post-treatment selection.}

CDIS \cite{dai2025selection} addresses \emph{pre-treatment selection}, where samples are filtered \emph{before} interventions (e.g., screening drug-trial participants prior to treatment assignment). GISL \cite{luo2025gene} identifies biological constraints, which happen before intervention and continuously filter out non-viable cells. However, $\mathcal{F}$-FCI targets \emph{post-treatment selection}, where samples are retained \emph{after} treatment/measurement (e.g., in gene expression analysis, cells that pass quality control in both observational and interventional groups are included in the dataset for analysis). This difference in when intervention interacts with selection yields opposite cross-setting invariant/variability patterns for variable pairs under selection: under pre-treatment selection, \emph{marginal} distributions are invariant, while \emph{conditional} distributions change; under biological constraints, both \emph{marginal} and \emph{conditional} distributions change; under post-treatment selection, \emph{marginal} distributions change (via collider/selection effects) while the \emph{conditional} distribution remains invariant. Methodologically, to model the different interaction patterns between intervention and selection, CDIS and GISL model interventions via an \emph{interventional twin graph}, whereas $\mathcal{F}$-FCI uses an \emph{augmented DAG} with an explicit selection node $S$ and intervention indicators to leverage these invariance/variability patterns. Theoretically, $\mathcal{F}$-FCI is sound and complete, and characterizes a finer-grained Markov equivalence class.
\section{The Use of Large Language Models}
An LLM was used to refine writing for clarity and readability but did not contribute to the research design, experiment, or analysis. All intellectual work was independently conducted by the authors, and any suggestions from the LLM were critically evaluated before use. The authors bear full responsibility for the research, and the LLM is not listed as a contributor or author.
\end{document}